\documentclass[letterpaper, 10 pt, journal, twoside]{IEEEtran} 

\IEEEoverridecommandlockouts


\usepackage[english]{babel}
\usepackage[T1]{fontenc}

\usepackage{microtype}
\setlength{\marginparwidth}{1cm}
\usepackage{blindtext}
\usepackage{siunitx}
\pagestyle{headings}
\usepackage{titlecaps}
\Addlcwords{or with if of and a to -off off}

\usepackage{graphicx}
\usepackage{color}
\graphicspath{{figures/}}
\usepackage[font=small]{caption}
\usepackage{subfloat}
\usepackage{subcaption}
\usepackage{pgfplots}
\usepackage{tikz}
\usepackage{tikzscale}
\usepackage{scalerel}
\usetikzlibrary{arrows}
\usetikzlibrary{plotmarks}
\usetikzlibrary{svg.path}
\usetikzlibrary{shapes.multipart}
\pgfplotsset{compat=newest}
\pgfplotsset{every axis/.append style={
		label style={font=\Large},
		tick label style={font=\large}  
}}
\tikzstyle{int}=[draw, fill=black!10, minimum size=5em,thick]
\tikzstyle{init} = [pin edge={to-,thick,black}]
\tikzset{Node Style/.style={thick, draw,fill=white,circle,inner sep=0pt,minimum size=3pt}}
\pdfminorversion=4
 
\usepackage{amsmath,amsfonts,amssymb,amsthm}
\usepackage[short,c2]{optidef}
\usepackage{float}
\usepackage{xspace}
\usepackage{siunitx}

\usepackage{array}
\usepackage{stfloats}

\usepackage{hhline}
\usepackage[export]{adjustbox}
\usepackage{csvsimple}

\usepackage[colorinlistoftodos]{todonotes}
\usepackage{comment}
\newcommand{\subparagraph}{} 
\usepackage{titlesec}

%
\usepackage[linesnumbered,ruled,vlined]{algorithm2e}
\usepackage{multirow}
\usepackage{multicol}
\makeatletter
\gdef\Shortstack{\@ifnextchar[\@Shortstack{\@Shortstack[c]}}
\gdef\@Shortstack[#1]#2{%
	\leavevmode
	\vbox\bgroup
	\baselineskip-\p@\lineskip 3\p@
	\let\mb@l\hss\let\mb@r\hss
	\expandafter\let\csname mb@#1\endcsname\relax
	\let\\\@stackcr\setlength{\baselineskip}{#2}%
	\@ishortstack}
\makeatother
\usepackage{booktabs}
\usepackage{tabularx}
\usepackage[absolute]{textpos}

\usepackage{cite}
\newcommand{\citep}[1]{\cite{#1}}

\usepackage{url}
\makeatletter
\let\NAT@parse\undefined
\makeatother
\usepackage[bookmarks=true,colorlinks]{hyperref}
\hypersetup{citecolor=blue}
\usepackage[capitalize]{cleveref}
\usepackage{orcidlink}

\setlength{\textfloatsep}{8pt}

%
\titlespacing*{\section}{0pt}{4mm}{2mm}
\titlespacing*{\subsection}{0pt}{2mm}{2mm}

\newcommand{\myParagraph}[1]{{\bf #1.}\xspace}

\newcommand{\isExtended}[2]{#2} 

%
\newcommand{\bw}{{B}}
\newcommand{\eg}{\emph{e.g.,}\xspace}
\newcommand{\ie}{\emph{i.e.,}\xspace}
\newcommand{\vs}{\emph{vs.}\xspace}
\newcommand{\name}{D-Lite\xspace}
\newcommand{\namebu}{BUD-Lite\xspace}
\newcommand{\nametd}{TOD-Lite\xspace}

\newcommand{\dsgname}{3D Scene Graph\xspace}
\newcommand{\dsgnames}{3D Scene Graphs\xspace}
\newcommand{\dsgshortname}{DSG\xspace}
\newcommand{\dsgshortnames}{DSGs\xspace}



\DeclarePairedDelimiter\ceil{\lceil}{\rceil}

\theoremstyle{plain}

\newtheorem{prop}{Proposition}

\theoremstyle{definition}
\newtheorem{definition}{Definition}

\newtheorem{ass}{Assumption}
\theoremstyle{remark}

\newcommand{\bmat}{\left[ \begin{array}}
\newcommand{\emat}{\end{array} \right]}
\newcommand{\bal}{\begin{align}}
\newcommand{\eal}{\end{align}}

\newcommand{\lr}{\left(}
\newcommand{\rr}{\right)}
\newcommand{\ls}{\left[}
\newcommand{\rs}{\right]}
\newcommand{\lb}{\left\lbrace}
\newcommand{\rb}{\right\rbrace}


\newcommand{\eps}{\varepsilon}
\newcommand{\tmax}{T_{\text{max}}}
\newcommand{\dsg}{\mathcal{G}}
\newcommand{\nodes}[1][]{\mathcal{V}_{#1}}
\newcommand{\edges}[1][]{\mathcal{E}_{#1}}
\newcommand{\dsgc}{\mathcal{G}'}
\newcommand{\nodesc}{\nodes[\dsgc]}
\newcommand{\edgesc}{\edges[\dsgc]}
\newcommand{\source}{s}
\newcommand{\target}{t}

\newcommand{\sources}{\mathcal{S}}
\newcommand{\targets}{\mathcal{T}}
\newcommand{\pairs}{\mathcal{P}}
\newcommand{\dist}[3]{d_{#1}(#2,#3)}
\newcommand{\distortion}{\beta}
\newcommand{\Out}[1]{\operatorname{Out}(#1)}
\newcommand{\In}[1]{\operatorname{In}(#1)}
\newcommand{\weight}[3]{W^{#1}(#2,#3)}
\newcommand{\maxweight}[3]{W_\text{max}^{#1}(#2,#3)}

\makeatletter
\newcommand{\dict}[1][\@empty]{\mathcal{D}%
	\ifx\@empty#1 \else {\ls#1\rs} \fi}
\makeatother

\newcommand{\hierarchy}{\mathcal{H}}
\newcommand{\layer}[2]{\mathcal{L}_{#2}^{#1}}

\newcommand{\ancestor}[3]{a_{#3}^{#2}(#1)}
\newcommand{\parent}[2]{p_{#2}(#1)}
\newcommand{\children}[1][]{C_{#1}}
\newcommand{\diam}[2]{\mathrm{diam}_{#2}(#1)}
\newcommand{\wmin}[1]{W_{\text{min}}^{#1}}
\newcommand{\wmax}[1]{W_{\text{max}}^{#1}}
\newcommand{\wcmin}[1]{W_{\text{min}}^{#1-1,#1}}
\newcommand{\wcmax}[1]{W_{\text{max}}^{#1-1,#1}}
\newcommand{\mmin}[1]{u_{\text{min}}^{#1}}
\newcommand{\mmax}[1]{u_{\text{max}}^{#1}}

\newcommand{\bmin}[1]{\mathrm{diam}_{\text{min}}^{#1}}
\newcommand{\lz}{\ell_0}

\newcommand{\lmax}{\ell_{\text{max}}}


\SetCommentSty{mycommfont}


\newcommand{\LC}[1]{{\color{red}LC: #1}} 
\setlength\marginparwidth{35pt}

\newcommand{\blue}[1]{{\color{blue}#1}}

\newcommand{\revise}[1]{#1\xspace}

 \newcommand{\reviseral}[1]{#1\xspace}

\newcommand{\linkToPdf}[1]{\href{#1}{\blue{(pdf)}}}
\newcommand{\linkToPpt}[1]{\href{#1}{\blue{(ppt)}}}
\newcommand{\linkToCode}[1]{\href{#1}{\blue{(code)}}}
\newcommand{\linkToWeb}[1]{\href{#1}{\blue{(web)}}}
\newcommand{\linkToVideo}[1]{\href{#1}{\blue{(video)}}}
\newcommand{\linkToMedia}[1]{\href{#1}{\blue{(media)}}}
\newcommand{\award}[1]{\xspace} 
\addto\extrasenglish{}
\addto\extrasenglish{}
\addto\extrasenglish{}

\newcommand{\setal}{~\emph{et al.}}
\newcommand{\robotQuery}{r_1}
\newcommand{\robotRespond}{r_2}

\title{
	\name: Navigation-Oriented Compression of \dsgnames for Multi-Robot Collaboration}

\author{
	Yun~Chang\textsuperscript{\orcidlink{0000-0002-2829-5256}},~\IEEEmembership{Graduate~Student~Member,~IEEE},
	Luca~Ballotta\textsuperscript{\orcidlink{0000-0002-6521-7142}},~\IEEEmembership{Member,~IEEE},
	and~Luca~Carlone\textsuperscript{\orcidlink{0000-0003-1884-5397}},~\IEEEmembership{Senior~Member,~IEEE}
	\thanks{
		Manuscript received: April 21, 2023; Revised: August 2, 2023; Accepted: September 8, 2023.
	}
	\thanks{
		This letter was recommended for publication by Editor Ani M. Hsieh upon evaluation of the Associate Editor and Reviewers' comments.
		This work was supported by
		ARL DCIST CRA W911NF-17-2-0181, 
		ONR RAIDER N00014-18-1-2828, 
		Lincoln Laboratory’s Resilient Perception in Degraded Environments program,
		the CARIPARO Foundation Visiting Programme ``HiPeR'', 
		and the Italian Ministry of Education, University and Research (MIUR) through the PRIN Project under Grant 2017NS9FEY ``Realtime Control of 5G Wireless Networks''.
		\textit{(Yun Chang and Luca Ballotta contributed equally to this work.)}
	}
	\thanks{
		Yun Chang and Luca Carlone are with Laboratory for Information \& Decision Systems (LIDS),
		Massachusetts Institute of Technology, Cambridge, MA, USA
		(e-mail: yunchang@mit.edu; lcarlone@mit.edu).
	}
	\thanks{
		Luca Ballotta is with Department of Information Engineering,
		University of Padova, Padova, Italy
		(e-mail: ballotta@dei.unipd.it).
	}
	\thanks{Digital Object Identifier (DOI): see top of this page.}
}

\isExtended{
	\markboth{IEEE Robotics and Automation Letters. Preprint Version. Accepted September, 2023}
	{Chang \MakeLowercase{\textit{et al.}}: \name: Navigation-Oriented Compression of \dsgnames for Multi-Robot Collaboration}}{
}

\begin{document}
	
	\maketitle
	
	\isExtended{}{
		\begin{textblock}{10}(3,.05)
			\begin{center}
				This article has been accepted for publication at the IEEE Robotics and Automation Letters.\\
				Please cite as: Yun Chang, Luca Ballotta, and Luca Carlone,\\
				``D-Lite: Navigation-Oriented Compression of 3D Scene Graphs for Multi-Robot Collaboration'',\\
				in \textit{IEEE Robotics and Automation Letters}, 2023.
			\end{center}
		\end{textblock}
	}
	

\begin{abstract}
	For a multi-robot team that collaboratively explores an unknown environment,
	it is of vital importance that the collected information is efficiently shared among robots
	in order to support
	exploration and navigation tasks.
	Practical constraints of wireless channels,
	such as limited bandwidth, 
	urge robots to carefully select information to be transmitted.
	In this letter, we consider the case where environmental information is modeled using a \emph{\dsgname},
	a hierarchical map representation that describes both geometric and semantic aspects of the environment.
	Then, we leverage graph-theoretic tools, namely \textit{graph spanners},
	to design \revise{greedy algorithms} that efficiently compress \dsgnames\xspace
	\reviseral{with the aim of enabling} communication \reviseral{between robots} under bandwidth constraints.
	Our compression \revise{algorithms} are \emph{navigation-oriented} in that they are designed to 
	approximately preserve shortest paths between locations of interest
	while 
	meeting a user-specified communication budget constraint.  
	The effectiveness of the proposed algorithms is demonstrated
	in robot navigation experiments in a realistic simulator.
	\isExtended{}{A video abstract is available at \url{https://youtu.be/nKYXU5VC6A8}.}
\end{abstract}

\isExtended{
	\begin{keywords}
		Communication constraints, semantic scene understanding, graph spanner, multi-robot SLAM, multi-robot systems.
	\end{keywords}
}{
	\begin{keywords}
		Communication constraints, \dsgnames, graph spanner, multi-robot navigation, resource allocation.
	\end{keywords}
}

\section{Introduction}
\label{sec:introduction}

\IEEEPARstart{I}{n the} near future, robot teams will perform cooperative tasks in a multitude of application scenarios,
ranging from exploration of subterranean environments,
to search-and-rescue missions in hazardous settings, 
to human assistance in houses, airports, factory floors, and malls, to mention a few.

A key requirement for cooperative exploration and navigation in an initially unknown environment is to 
build a map model of the environment as the robots explore it.
Recent work has proposed \emph{\dsgnames} as an expressive hierarchical model of complex environments~\cite{Rosinol20rss-dynamicSceneGraphs,Hughes22rss-hydra,Armeni19iccv-3DsceneGraphs,Wu21cvpr-SceneGraphFusion,Kim19tc-3DsceneGraphs,Talak21neurips-neuralTree}:
a \dsgname organizes
spatial and semantic information, including objects, structures (\eg walls), places (\ie free-space locations the robot can reach), rooms, and buildings into a graph with multiple layers corresponding to different levels of abstraction. 
\dsgnames provide a user-friendly model of the scene that can support the execution of high-level instructions by a human.
Also, they capture traversability between places, rooms, and buildings that can
be used for path planning.

\begin{figure}
	\centering
	\includegraphics[trim={30 30 30 30},clip, width=1.0\columnwidth]{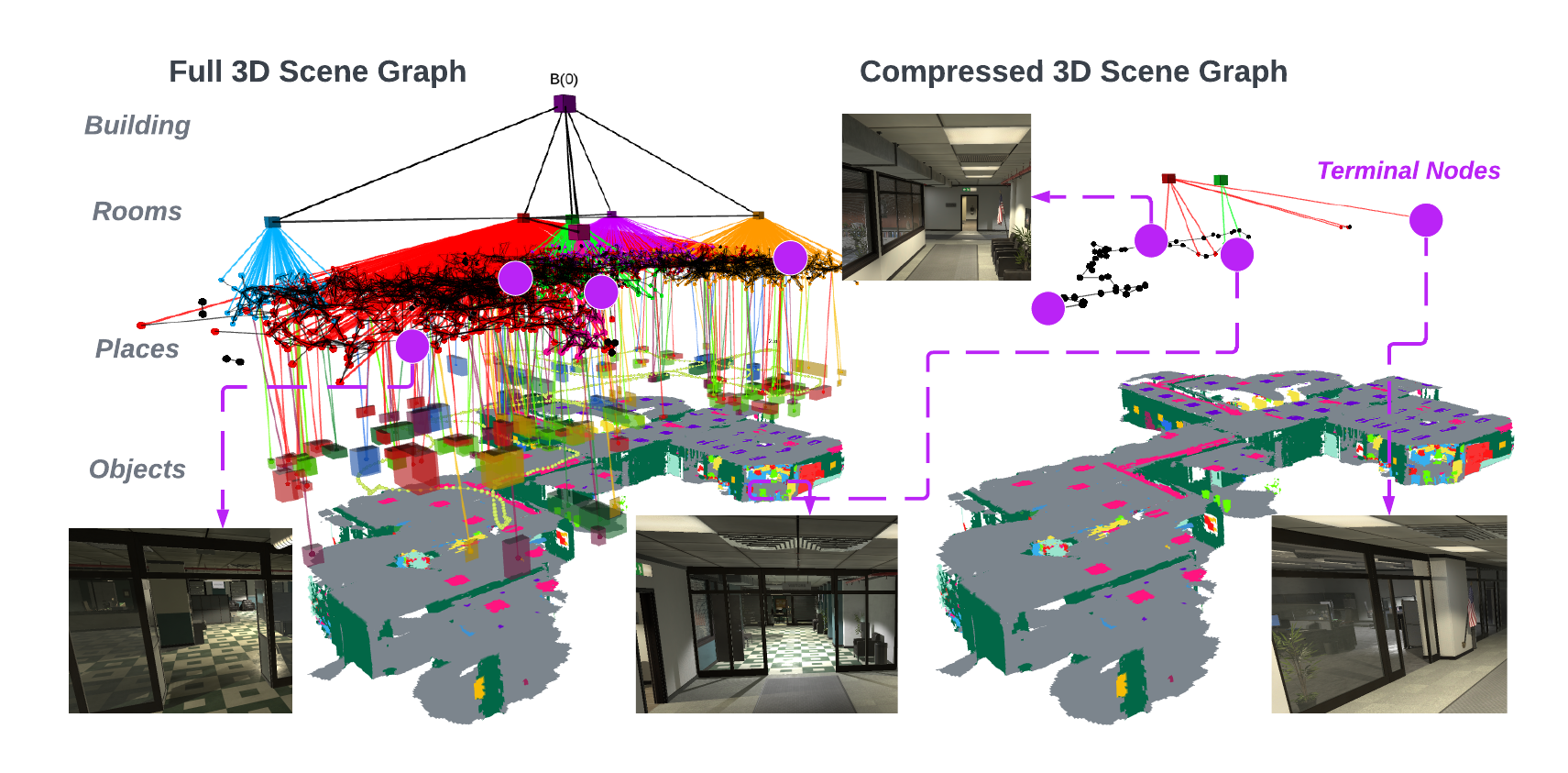} 
	\caption{\dsgname of an environment (left) and compressed version produced by \name (right).
		The purple circles mark the \emph{terminal nodes}: \name approximately preserves \revise{shortest-path distances}
		between those locations of interest.}
	\label{fig:cover}
\end{figure}

To scale up from single- to multi-robot systems \revise{and to longer missions and larger environments},
a key challenge is to share the map information among the robots to support cooperation.
For instance, the robots may exchange partial maps such that a robot can navigate
within a portion of the environment mapped by another robot. 
However, 
the potentially high volume of data to be transferred over a shared wireless channel
easily saturates the available bandwidth,
degrading team performance.
\revise{This holds true especially when the wireless channel is also used to transmit other information in the field
 ---such as images or place recognition information for localization and map reconstruction---
 which further limits the bandwidth available for transmitting map information in a timely manner~\cite{Cieslewski18icra,Tian18rss,Chang21icra-KimeraMulti,Dennison22ral-loopPrioritization}.}
The challenge of information sharing is particularly relevant when the map is modeled as a \dsgname, 
since these are rich and 
potentially large models if all nodes and edges are retained. 
On the other hand, \dsgnames also provide opportunities for compression: for instance, 
the robots may exchange information about rooms in the environment rather than sharing fine-grained traversability information
encoded by the place layer; similarly, for a large-scale scene, the robot may just specify a sequence of buildings to be traversed, abstracting away geometric information at lower levels.
This is similar to what humans do: 
when providing instructions to a person about how to reach a location in a building,
we would specify a sequence of rooms and landmarks (\eg objects or structures) rather than a detailed metric map or a precise path.

Therefore, the question we address in this paper is: \emph{how can we compress a \dsgname to retain relevant information  
the robots can use for navigation while meeting a communication budget constraint, expressed as the maximum size of the map the robots can transmit?} 
Besides multi-robot communication, task-driven map compression can play a role in long-term autonomy under resource constraints, 
where the robots might suffer memory limitations and 
retain only key portions of a large map.
\revise{Such a compression is also useful when it is desirable for the robots to share essential information under privacy considerations 
by sending only task-relevant data~\cite{Choudhary17ijrr-distributedPGO3D}}. 

\myParagraph{Related work}
\isExtended{%
	Graph compression is an active area of research in 
	mathematics, computer science, and telecommunications, where
	it finds applications to, \eg
	vehicle and packet routing~\cite{Becker17esa-vehicleRouting,Gilbertkdd-compressingGraphs,Dobson14ijrr-roadmapSpanners},
	and 3D point cloud compression~\cite{DeQueiroz16ip-pointCloudCompression,Sun19ral-pointcloudCompression,Thanou16ip-pointcloudCompression}.
}{%
	Graph compression is active area of research in discrete
	mathematics, computer science, and telecommunications, where
	it finds applications to, \eg
	vehicle routing~\cite{Becker17esa-vehicleRouting,Dobson14ijrr-roadmapSpanners}, packet routing in wireless networks~\cite{Gilbertkdd-compressingGraphs},
	and compression of unstructured data such as 3D point
	clouds~\cite{DeQueiroz16ip-pointCloudCompression,Sun19ral-pointcloudCompression,Thanou16ip-pointcloudCompression}.
}

A prominent body of works simplifies a graph by carefully pruning it 
\isExtended{%
	to retain relevant information.}{%
	based on structural properties of the graph.
	These methods typically entail some information loss, and
	aim to only retain relevant information
	when storing,
	processing,
	parsing,
	or transmitting the full graph is infeasible.}
For example,
references~\cite{Raghavan03-WebGraphs,Gilbertkdd-compressingGraphs} find efficient representations of huge web and communication networks
by heuristically selecting a few key elements,
while the work~\cite{Chekuri15esa-connectivityPreserving} prunes graphs while preserving connectivity among nodes.
Within the discrete mathematics literature,
graph compression has been studied with focus on ensuring low distortion (or \emph{stretch}) of 
inter-node distances.
For example,
\emph{spanning trees} and \emph{Steiner trees} are the smallest subgraphs maintaining connectivity
in undirected graphs~\cite{Harish12gpuc-spanningTrees,Mehlhorn88ipl-steinerTreeFast}.
\emph{Graph spanners} remove a subset of edges while allowing for 
a user-defined maximum distortion of shortest paths~\cite{Ahmed20csr-GraphSpannersReview,Kobayashi20stacs-fptAlgSpanner,Elkin22dc-WeightedAdditiveSpanner}.
A special case are \emph{distance preservers}~\cite{Bodwin19soda-distancePreservers}
that prune graphs but keep unaltered the distances for specified node pairs.
\emph{Emulators} are tools that replace a large number of edges with a few strategic ones
to ensure small stretch of distances~\cite{Elkin18tang-emulators}.

\isExtended{}{On the other hand,
	lossless compression strategies aim to find compact representations of graphs to be efficiently stored or processed.
	A subset of related work directly deals with communication-efficient re-labeling of nodes
	that enhance graph encoding.
	For example,
	some classical methods exploit algebraic tools such as spectral decomposition of the incidence or adjacency matrix
	that allow encoding the latter with a limited number of codewords,
	while paper~\cite{Kang11icdm-GraphCompressionMining} proposes an algorithm that exploits graph structures such as hubs and spokes.
	A recent survey of lossless compression techniques is given in~\cite{Besta19arXiv-surveyGraphCompression}.
	A different paradigm for lossless compression is based on hypergraphs, which
	generalize standard graphs by allowing hyperedges that connect more than two nodes.
	Among others,
	paper~\cite{Borici14aina-semanticGraphCompression} tailors semantic data compression,
	\cite{Young21commPhys-hypergraphReconstruction} proposes a procedure to construct hypergraphs from network data,
	\cite{Karypis99desAutConf-multilevelHypergraphPartitioning,Devine06pdps-hypergraphPartitioning} tackle hypergraph partitioning,
	and~\cite{Zhang20iotj-hypergraphSignalProcessing} presents a signal processing framework based on hypergraphs.}

Related work in robotics focuses on graph compression to speed up path planning and decision-making.
Silver\setal~\cite{Silver21Number:13-gnnImportanceLearning} use Graph Neural Networks to detect key nodes by
learning heuristic importance scores. Agia\setal~\cite{Agia22corl-Taskography} propose an algorithm that exploits the \dsgname hierarchy
to prune nodes and edges not relevant to the robotic task.
Targeting a related application domain,
Tian\setal~\cite{Tian21ijrr-resourceAwareLoopClosure} 
study
computation and communication efficiency of multi-robot loop closure,
providing a strategy to share a limited number of visual features in multi-robot SLAM,
while Denniston\setal~\cite{Dennison22ral-loopPrioritization} introduce a graph-based method to prune the multi-robot loop closures
in order to save on processing time.
Larsson\setal~\cite{Larsson20tro-qTree,
	Larsson21caadcps-informationTheoreticAbstractions,
	Larsson21ral-infoTheoreticAbstractionsPlanning}
propose algorithms to build hierarchical abstractions of tree-structured representations, for instance enabling 
fast planning on occupancy grid maps at progressively increasing resolution.

\myParagraph{Novel contribution}
In this paper,
we tackle the challenging problem of efficiently sharing \dsgnames for navigation
under hard communication constraints.
We propose two greedy algorithms, \namebu and \nametd (collectively referred to as \emph{\name}), 
that leverage graph spanners to prune nodes and edges from a \dsgname  while minimizing the 
distortion of the shortest paths between locations of interest (\emph{terminal nodes}, see~\cref{fig:cover}).
Compared to the literature,
our algorithms 
(i) are designed to retain navigation-relevant information, 
(ii) leverage the hierarchical structure of the \dsgname for compression,
and (iii) enforce a user-specified size of the compressed \dsgname.
Our algorithms are computationally efficient and apply to general \dsgnames.
%
\isExtended{%
	}{Furthermore,
	we allow for loose specifications of navigation tasks, 
	to make our approach flexible to inexact or uncertain queries:
	for instance, a querying robot requesting a map from another robot may specify a 
	number of potential location it has to navigate between, and this information is used by the 
	queried robot for more effective \dsgname  compression.
	}%
\isExtended{%
	}{To meet a sharp budget on transmitted information,
	we design suitable heuristics that exploit a graph spanner of the \dsgname to be sent:
	graph spanners allow to trade-off the size of a sub-graph of the \dsgname to be transmitted
	for the maximum distortion suffered by the shortest paths between nodes of interest.
	This helps us design compression algorithms with attention to time performance of navigation tasks,
	for which paths planned on the compressed graph are not much longer compared to paths computed from the original graph
	containing fine-scale spatial information.}%
\isExtended{%
	}{
		
}%
In contrast, related works are either restricted to trees or involve mixed-integer programming~\cite{Larsson21caadcps-informationTheoreticAbstractions,Larsson20tro-qTree}.
\isExtended{%
	}{In particular,
	the approach in~\cite{Larsson20tro-qTree} builds geometric abstractions on-the-fly
	without considering semantic or hierarchical information of the graph to be compressed. }%
Other pruning strategies do not directly target path planning tasks
\isExtended{%
	}{and focus on computational efficiency of local task-planning algorithms}~\cite{Agia22corl-Taskography}.
Finally,
most works tailored to real-time compression do not allow for hard communication constraints\isExtended{~\cite{Larsson20tro-qTree,Agia22corl-Taskography}}{,
	either turning to soft constraints in the form of Lagrangian-like regularization~\cite{Larsson20tro-qTree},
	or focusing on computational aspects with feasibility requirements~\cite{Agia22corl-Taskography}.
	In particular,
	the work~\cite{Agia22corl-Taskography} proposes to prune the \dsgname to boost efficiency of a local task-planning routine,
	but it does not allow for sharp bounds on the size of the pruned graph,
	and further assumes that a specific task is known beforehand and only needs to be efficiently planned by the robot
	(\eg finding a way to grab and move specified objects)}.
\isExtended{%
	}{
	
	}%
The effectiveness of our algorithms is validated through realistic simulated experiments.
We show that the proposed method meets hard communication constraints
without excessively impacting navigation performance.
For example,
navigation time on the compressed graph increases by at most
8\% 
after compressing the \dsgname to 1.6\% of its size.

\isExtended{}{
\myParagraph{Paper organization}
In \autoref{sec:problem-formulation}, we present the motivating setup for navigation-oriented compression in the presence of communication constraints,
and states \dsgname compression as an optimization problem
which can be exactly solved via Integer Linear Programming (ILP).
To circumvent computational intractability of the ILP in practice,
we design efficient algorithms that ensure to meet available communication resources
while retaining spatial information useful for navigation.
In particular,
we leverage graph spanners to trad-off size of the compressed graph for distortion of shortest paths:
mathematical background on spanners in provided in~\autoref{sec:spanner},
while explanation of our proposed algorithms is detailed in~\autoref{sec:algorithms}.
In~\autoref{sec:experiments},
we test our approach with realistic simulation software for robotic exploration,
and compare it to the compression approach in~\cite{Larsson20tro-qTree}.
Final remarks and future research directions are given in~\autoref{sec:conclusions}.
}


\section{Navigation-Oriented Scene Graph Compression}
\label{sec:problem-formulation}

\myParagraph{Motivating scenario}
We consider a multi-robot team exploring an unknown environment.
Each robot navigates to gather information and builds a
\dsgname (\dsgshortname) $ \dsg = \lr \nodes[\dsg], \edges[\dsg] \rr $ that describes
the portion of the environment explored so far~\cite{Rosinol20rss-dynamicSceneGraphs,Hughes22rss-hydra,Armeni19iccv-3DsceneGraphs,Wu21cvpr-SceneGraphFusion}.
%
As robots are scattered across a possibly large area,
they exchange partial maps to collaboratively gather information about the environment.
In particular,
a robot $ \robotQuery $ may query another robot $ \robotRespond $
to get \mbox{information about the area explored by $\robotRespond$.}\isExtended{}{\footnote{
		We assume robots talk with each other as soon as they get within communication range.}}

\myParagraph{Navigation-oriented query}
We assume that the querying robot $\robotQuery$ needs
to reach one or more \textit{target locations} $ \targets \subset \nodes[\dsg] $ within the \dsgshortname $\dsg = \lr \nodes[\dsg], \edges[\dsg] \rr$
\revise{built by} robot $\robotRespond$.
Such locations, for instance, may be objects or points of interest (\eg the building exits).
Hence,
$\robotRespond$ shall transmit its local map
\revise{(\ie nodes and edges of its \dsgshortname)}
such that $\robotQuery$ can reach locations in $ \targets  $
from a set $ \sources \subset \nodes[\dsg] \setminus \targets $ of \textit{source locations}.
In practice,
the latter may \revise{represent} physical access points 
(\eg doors) \revise{at the boundary of} the area explored by $ \robotRespond $
that are near $ \robotQuery $,
and may be 
estimated by $ \robotRespond $
based on the current location of $\robotQuery$.
In the following,
we generically refer to sources and targets as \textit{terminals} (or \textit{terminal nodes}),
which for the sake of this work are assumed to be \textit{place} nodes in the \dsgshortname.

%

\myParagraph{Communication constraints}
Data sharing among robots occurs over a common wireless channel.
Because of resource constraints of wireless communication,
such as limited bandwidth, robot $\robotRespond$ cannot transmit its entire \dsgshortname to robot $\robotQuery$.
Specifically,
we assume that robots can send only a small portion of their \dsgshortname each time they receive
a share \isExtended{request.}{request.\footnote{
		Communication constraints can be practically intended as maximum transmission time $ \tmax $:
		a robot first senses the channel and then,
		based on available communication resources,
		estimates the amount of information that can be sent in time $ \tmax $.
		For example,
		assuming bit-rate $ r $,
		specification of $ \tmax $ unambiguously defines the maximum amount of bits $ b_\text{max} = r\tmax $ to be sent,
		which is mapped to a \dsgshortname-related quantity (\eg number of nodes).}}
Hence,
queried robot $ \robotRespond $ needs to compress its \dsgshortname $ \dsg $
into a \reviseral{subgraph} $ \dsgc = \lr \nodesc, \edgesc \rr $, \reviseral{with $\nodesc\subseteq\nodes$ and $\edgesc\subseteq\edges$,}
\reviseral{that contains} at most $B$ nodes (where the budget $B$ reflects the available bandwidth)
\reviseral{in order to comply}
with communication constraints, 
while \reviseral{at the same time }retaining information useful for robot $ \robotQuery $ to navigate between the terminal nodes.

\myParagraph{Pruning \dsgnames}
Assuming navigation-oriented queries,
the relevant information reduces to nodes and edges describing efficient paths robot $\robotQuery$ can use to move across the map.
Specifically,
the collection of all shortest paths between a source $ \source\in\sources $ and target $ \target\in\targets $
is the least information ensuring that navigation by $ \robotQuery $ takes the shortest possible time,
\ie the time a robot with full knowledge of the map would take.

\newcommand{\mpwfour}{4.0cm}
\begin{figure}[t]
	\begin{center}
		\begin{minipage}{\textwidth}
			\begin{tabular}{cccc}
				\begin{minipage}{\mpwfour}%
					\centering%
					\includegraphics[width=0.8\columnwidth]{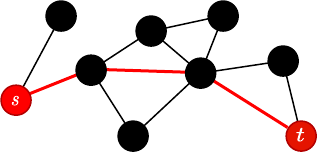}
					\\
					{\small (a) Full graph (length $ 3 $).}
				\end{minipage}
				&
				\begin{minipage}{\mpwfour}%
					\centering%
					\includegraphics[width=0.8\columnwidth]{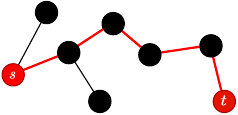}
					\\
					{\small (b) Pruned graph (length $ 5 $).}
				\end{minipage}
			\end{tabular}
		\end{minipage} 
		\caption{Distortion of shortest path from $ \source $ to $ \target $ (thick red).}
		\label{fig:shortest-paths}
	\end{center}
\end{figure}

However,
transmitting all nodes in the shortest paths may violate the communication constraint (see \cref{fig:shortest_path_compare}):
this can happen with many terminals or if shortest paths have little overlap.
Hence,
heavier pruning of the \dsgshortname might be needed to make communication feasible.
This means that information useful for path planning will be partially unavailable to the querying robot's planner.
In other words,
because the \dsgshortname $ \dsg $
cannot be fully sent,
the distance (length of a shortest path) between a pair of terminals in the transmitted graph $ \dsgc $
will be larger than the distance between those same terminals in the original \dsgshortname.
A schematic example is provided in~\autoref{fig:shortest-paths},
where the length of the shortest path between nodes $ \source $ and $ \target $ increases from $ 3 $ to $ 5 $
after node and edge removal.
For example,
a robot may prune place nodes within a room,
or share only the room node as a coarse representation of places.
This requires less communication,
but the querying robot $\robotQuery$,
which receives a coarser map,
will be forced to,
\eg take a longer detour across a room, 
instead of traversing the original shortest path along a set of place nodes.
Mathematically,
this means $ \dist{\dsgc}{s}{t} \ge \dist{\dsg}{s}{t} $
for any $ \source\in\sources $
and for any $ \target\in\targets $,
where $ \dist{\mathcal{G}}{u}{v} $ is the distance
from node $ u $ to node $ v $ in $ \mathcal{G} $.

\myParagraph{Problem formulation}
For the querying robot $\robotQuery$ to navigate efficiently, 
the distance $ \dist{\dsgc}{s}{t} $ between $ \source\in\sources $ and $ \target\in\targets $ in the transmitted graph $ \dsgc $
should not be much larger than the distance in the original graph $ \dsg $.
Hence,
the queried robot $\robotRespond$ shall prune $ \dsg $ so as to minimize the \textit{distortion},
or \textit{stretch},
between shortest paths in the original and compressed graphs, while meeting the communication budget $B$.
This can be cast into the following optimization problem:
\begin{mini!}
	{\hspace{-3mm}\dsgc \subseteq \dsg}{\distortion}
	{\label{eq:optimization-problem}}
	{\label{eq:optimization-problem-objective-cost}}
	\addConstraint{\hspace{-3mm}\dist{\dsgc}{s}{t}}{\le \dist{\dsg}{s}{t} + \distortion \maxweight{\dsg}{s}{t}\;\,}{\forall (s,t)\in\pairs}\label{eq:optimization-problem-distortion-constr}
	\addConstraint{\hspace{-3mm}|\nodes[\dsgc]|}{\le\bw,}{}\label{eq:optimization-problem-bw-constr}
\end{mini!}
where $ \maxweight{\mathcal{G}}{u}{v} $ is the maximum edge weight on a shortest path from $ u $ to $ v $ in $ \mathcal{G} $
and $ \pairs \subseteq \sources\times\targets $ is the set of considered source-target pairs.
Constraint~\eqref{eq:optimization-problem-bw-constr} ensures that the amount of transmitted information
(number of nodes)
meets the communication constraint,
while constraint~\eqref{eq:optimization-problem-distortion-constr} and cost~\eqref{eq:optimization-problem-objective-cost}
encode minimization of the maximum distortion incurred by the shortest paths.
The coefficient $ \maxweight{\dsg}{s}{t} $ in~\eqref{eq:optimization-problem-distortion-constr}
makes the distortion computation meaningful for weighted graphs.

Problem~\eqref{eq:optimization-problem} can be solved by means of integer linear programming (ILP),
see \isExtended{the technical report~\cite[Appendix~A]{Chang22arxiv-dlite}}{\autoref{app:ilp}}.
However,
the runtime complexity of ILP solvers is subject to combinatorial explosion, 
making this approach impractical for online operation.
Hence,
we propose greedy algorithms that require lighter-weight computation,
based on \textit{graph spanners}.
\isExtended{Background about these tools is given in~\cite[Section~III]{Chang22arxiv-dlite}.}{}
	\isExtended{}{

\section{\titlecap{background: graph spanners}}
\label{sec:spanner}

We ground our compression algorithm in the concept of \textit{graph spanner}~\citep{Ahmed20csr-GraphSpannersReview}.
In words,
a spanner is a compressed (\ie sparse) representation of a graph 
such that shortest paths between nodes are distorted at most by a user-defined stretch.
Formally,
a spanner $ \mathcal{G}'=\lr \nodes,\edges'\rr $ of graph $ \mathcal{G}=\lr\nodes,\edges\rr $ is a subgraph
such that $ \edges'\subseteq\edges $ and the following inequality holds for $ u,v\in\nodes $,
\begin{equation}\label{eq:distortion-spanner}
	\dist{\mathcal{G}'}{u}{v} \le \alpha\dist{\mathcal{G}}{u}{v} + \beta\maxweight{\mathcal{G}}{u}{v},
\end{equation}
where $ \alpha \ge 1 $ and $ \beta \ge 0 $ are given constants.
For generic $ \alpha $ and $ \beta $,
$ \mathcal{G}' $ is called an $ (\alpha,\beta) $-spanner,
whereas if $ \beta $ (resp. $\alpha$) is equal to zero (resp. one)
$\mathcal{G}'$ is called \textit{$\alpha$-multiplicative} spanner
(resp. $\beta$-\textit{additive} or $ +\beta $ spanner).
Inequality~\eqref{eq:distortion-spanner} may hold for all nodes in $\mathcal{G}$ or 
for a few pairs as in~\eqref{eq:optimization-problem-distortion-constr}:
in the latter case,
the resulting subgraph is referred to as a \textit{pairwise spanner}.

Applications of spanners include navigation or packet routing in large graphs,
whose size makes running path planning algorithms in the original graph computationally infeasible,
\cite{Dobson14ijrr-roadmapSpanners,Klein06tc-subsetSpanner}.
In this case, one can compute a spanner of the original graph and run planning 
algorithms on the spanner instead.

As one can see from~\eqref{eq:distortion-spanner},
the characterization of spanners shares similarity with problem~\eqref{eq:optimization-problem}.
Unfortunately,
no method is known in the literature to build a spanner given a fixed node (or edge) budget,
whereas algorithms usually enforce stretch~\eqref{eq:distortion-spanner} given 
input parameters $ \alpha $ and $\beta$
while attempting to minimize the total spanner edge-weight to obtain lightweight representations.
The standard formulation of the graph spanner problem can be then written as follows~\cite[Problem 2]{Ahmed20csr-GraphSpannersReview},
\begin{mini!}
	{\edgesc \subseteq \edges}{\sum_{(i,j)\in\edgesc}\weight{\dsg}{i}{j}}
	{\label{eq:spanner-problem}}
	{\label{eq:spanner-problem-objective-cost}}
	\addConstraint{\dist{\dsgc}{s}{t}}{\le \alpha\dist{\dsg}{s}{t} + \distortion \maxweight{\dsg}{s}{t},}{}\label{eq:spanner-problem-distortion-constr}
\end{mini!}
where $ \weight{\dsg}{i}{j} $ is the weight of edge $ (i,j) $
and the objective function for unweighted graphs reduces to counting the number of edges.
For multiplicative spanners
this problem was quickly solved,
with the classical work~\citep{Althofer93dcg-sparseSpanners} proposing and analyzing a greedy algorithm
which is known to be the best (in terms of spanner size) that runs in polynomial time.
Additive and  $(\alpha,\beta) $-spanners are instead more complex to build,
and many algorithms have been proposed in the literature:
early efforts were devoted to unweighted
graphs~\citep{Kavitha15stacs-pairwiseSpanners,
	Baswana10ta-spanners,
	Abboud17acmj-additiveSpannerBound,
	Cygan13stacs-pairwiseSpanners},
while subsequent work has focused on the general weighted
case~\citep{Elkin22dc-WeightedAdditiveSpanner,
	Elkin22swat-almostShortestPath,
	Ahmed20gctcs-weightedAdditiveSpanners,
	Ahmed21sea-multiLevelSpanners}.
Other studies are concerned with
distributed~\citep{Censor-Hillel18dc-distributedAdditiveSpanners} and dynamical~\citep{Baswana08da-spannerDynamicAlg} methods,
Euclidean graphs~\citep{Arya95tc-EuclideanSpanners},
and reachability preservation in digraphs~\citep{Abboud18soda-ReachabilityPreservers},
to mention a few.

To the best of our knowledge,
the only paper to address the presence of an edge budget $ E_\text{max} $ is~\cite{Kobayashi20stacs-fptAlgSpanner}.
However,
the algorithm  proposed in~\cite{Kobayashi20stacs-fptAlgSpanner} receives in input also parameters $ \alpha $ and $ \beta $,
and checks feasibility of an $ (\alpha,\beta) $-spanner with at most $ E_\text{max} $ edges.
Furthermore,
its runtime increases exponentially with $ E_\text{max} $,
making it unsuitable for robotics applications.

A possible way to tackle the problem at hand
is to iteratively build spanners with larger and larger distortion,
until the budget is met.
However,
several issues can hamper such a strategy.
First,
running a spanner-building algorithm several times may be time-consuming.
Second,
while small-sized (\ie with $ O(n^{1+\eps(\alpha)}) $ edges,
for some small $ \eps(\alpha)>0 $) multiplicative spanners can be built for any given constant coefficient $ \alpha \ge 1 $,
few constant-distortion additive spanner constructions are known for weighted graphs,
with coefficient $ \beta \in \{2,4,6\} $.
Conversely,
\textit{polynomial} distortion $ \beta = \beta(n) $ is needed
to build additive spanners with near-linear size~\cite{Abboud17acmj-additiveSpannerBound},
thus the trade-off between spanner size and path distortion is not easy to exploit.

An important point is that multiplicative and additive distortions may yield
dramatic differences in paths induced by the spanner.
In particular,
multiplying path length by a constant factor in large graphs may be undesirable in practice:
for example,
if a navigation task nominally takes one hour,
stretching it to two or three hours yields substantial performance degradation.
Conversely,
additive stretch is usually preferred because it provides a constant time overhead,
which is why we used this kind of distortion in our problem formulation.

In the following,
we illustrate a heuristic procedure that allows us to meet the budget constraint in~\eqref{eq:optimization-problem-bw-constr},
runs in real time,
and enforces a low distortion of shortest paths
as measured by condition~\eqref{eq:optimization-problem-distortion-constr}.}

\section{\titlecap{3D scene graph compression algorithms}}
\label{sec:algorithms}

We propose \textit{\name}, 
a compression method for \dsgshortnames to meet communication constraints
with attention to navigation efficiency.
We design two versions of \name,
which 
are initialized with a spanner of the full \dsgshortname (\autoref{sec:alg-build-spanner})
and tackle the compression problem from opposite perspectives.

The first algorithm,
\namebu
(\autoref{sec:alg-bottom-up}),
performs progressive bottom-up compression
of the spanner computed during initialization,
exploiting the \dsgshortname abstraction hierarchy.
In contrast,
the second algorithm,
\nametd
(\autoref{sec:alg-top-down}),
works top-down expanding nodes with the spanner as a target.

\subsection{Intuition and the Role of the \dsgname Hierarchy\!\!\!}
\label{sec:compression-idea}

\isExtended{}{
	Ideally, navigation-oriented compression of a 
	\dsgshortname 
	would require searching among all possible subgraphs of $\dsg$
	to find one that minimally stretches paths between terminals.
	Such a search is prone to combinatorial blow-up
	and is thus impractical.
}

Assume we want to design a greedy procedure that removes nodes and edges in $\dsg$ 
while limiting the incurred path stretch.
\isExtended{}{This goal is subject to a nontrivial trade-off. 
	On the one hand,
	to ensure low stretch (\ie retain navigation performance),
	it is desirable to parse one or a few nodes at each iteration
	so as to introduce extra distortion in a 
	controlled way.
	On the other hand,
	parsing too few nodes at each time induces a large number of total iterations,
	and is computationally expensive for online operation.
	Hence,
	an effective algorithm should effectively choose the size of node batch to be greedily compressed at each iteration
	to strike a balance between compression quality and runtime.
	
}%
To this aim,
we crucially exploit the \textit{hierarchical structure} of the \dsgshortname.
\reviseral{We refer to a node $m$ that is adjacent to node $n$ in the upper layer as a \textit{child} of $n$
	to stress the hierarchical semantics of the \dsgshortname,
	and symmetrically call node $n$ the \textit{parent} of $m$.
	The children of $n$ in graph $\dsg$ are denoted by $ \children[\dsg](n) $. 
	Also,
	the set $\edges[\dsg](n)$ gathers all edges incident to $n$ in $\dsg$.
}%
The \dsgshortname hierarchy allows us to see a node 
as a 
``compressed'',
or ``abstract'',
representation of its children. 
Hence,
transmitting $ n $ rather than $ \children[\dsg](n) $ saves communication 
and conveys partial spatial information about nodes in $ \children[\dsg](n) $. 
For instance,
let $ \children[\dsg](n) $ represent places inside a room
and $ n $ the associated room node.
A robot that needs to reach a location $ \target\in\children[\dsg](n) $ 
(\eg the door) in that room \revise{with no bandwidth constraints} 
can be provided with a sequence of places to reach $t$.
Alternatively, the robot can  be given the room node $ n $
\revise{and it needs to} explore the room to find the target $t$:
this extra exploration 
\isExtended{}{,
	(corresponding to additional path stretch in the compressed \dsgshortname) }%
\reviseral{takes longer,} 
degrading navigation performance,
but allows for compression to meet communication constraints.
The navigation time for local exploration
\isExtended{%
}{(\eg to reach a place from the room centroid) }%
is encoded by the weights of \reviseral{edges connecting non-finest resolution nodes or nodes at different resolutions (layers).
	For our experiments,
	we derive such weights from the full \dsgshortname as detailed in \isExtended{\cite[Appendix~B]{Chang22arxiv-dlite}.}{\autoref{app:edge-weight}.}
	However,
	we argue that a robot can estimate all weights on-the-fly (while building the \dsgshortname)
	based on the actual navigation time it experiences.
}

The discussion above suggests a simple way to compress the \dsgshortname:%
\isExtended{%
}{
a greedy procedure can be devised so that
}
\revise{nodes in a layer can be progressively replaced by their parent nodes in the layer above.
Every time we replace nodes with more ``abstract'' ones (rooms, buildings)
the length of the paths passing through those nodes increases,
\reviseral{indicating longer navigation}.
Hence,
we can opportunistically select which nodes to ``abstract away''
so as to achieve a small stretch in the paths between terminals. 
In alternative, we can start with a coarse representation (including only the highest abstraction level) and 
expand it to reduce the stretch of the paths.}
We present these two greedy strategies below and initialize both procedures by computing a spanner of the given \dsgshortname, 
as explained next.

\subsection{Building a \dsgshortname Spanner}
\label{sec:alg-build-spanner}

\isExtended{}{Both proposed algorithms build a spanner of the \dsgshortname during initialization.
	A detailed description of how each procedure uses this spanner is deferred to~\cref{sec:alg-bottom-up,sec:alg-top-down}.}

\isExtended{The literature provides several algorithms to produce spanners of an input graph
	given a user-specified stretch on the distance between terminals.
	The spanner need not meet our budget constraint, hence we use it just as initialization for \name.
	We adapt the algorithm in~\cite[Section 5]{Elkin22dc-WeightedAdditiveSpanner}
	\revise{to build} a spanner of the \reviseral{full} \dsgshortname with additive path stretch. 
	\revise{The procedure \reviseral{initializes the spanner} with a random selection of edges:}
	to exploit the \dsgshortname hierarchy,
	\revise{we modify the original algorithm by manually adding cross-layer edges during the initialization.}
	\revise{Also,
		once the spanner is built,
		we retain only nodes and edges 
		relevant for navigation
		by removing all nodes that are not traversed by shortest paths between terminals \reviseral{in the spanner just built}.\footnote{
		\reviseral{We assume that the robot can compute shortest paths between terminals in a reasonable time as compared to the overall compression procedure.}}}
		This greatly reduces the graph to be compressed,
		making our compression strategies \reviseral{based on hierarchal abstractions} more efficient.
		We call this subroutine \texttt{build\_spanner}.
		In the interest of space,
		we defer more details to~\cite[Section~III-B]{Chang22arxiv-dlite}.
}{
	\cref{alg:build-spanner} describes how to build a spanner of the \dsgshortname
	that enforces a user-defined maximum additive stretch for distances between specified terminal pairs in $ \pairs $.
	We adapt our algorithm from~\cite[Section 5]{Elkin22dc-WeightedAdditiveSpanner}.
	Specifically,
	the procedure~\cite{Elkin22dc-WeightedAdditiveSpanner} can trade spanner size for stretch according to input parameters,
	building a $ +cn^{\frac{1-\eps}{2}}\alpha W_\text{max}^\dsg $ spanner
	of size $ O(n^{1+\eps}) $.\footnote{
		Parameter $ \alpha $ might depend on $ n $,
		\eg the authors in~\cite{Elkin22dc-WeightedAdditiveSpanner} use $ \alpha(n) = \log n $.}
	That algorithm is intended for generic spanners (not pairwise),
	hence
	we adapt it to our scope by retaining only nodes and edges needed to connect terminal pairs in $ \pairs $,
	and deleting all others.
	\cref{alg:build-spanner} is composed of two sequential stages:
	an initialization phase that builds a temporary spanner $ \dsgc' $
	with a small number of edges
	that attempts to keep low initial path distortions,
	and a "buying" phase where edges are added to meet the stretch constraint.
	The initialization selects edges in three ways:
	performing a $ d $-light initialization~\cite[Section 2]{Ahmed20gctcs-weightedAdditiveSpanners} with appropriate $ d $ (\cref{alg0:light-initialization}),
	which in words ensures that each node has some initial neighbors;
	randomly picking cross-layer edges to exploit the \dsgshortname hierarchy (\cref{alg0:random-vertical-edges});
	adding a greedy multiplicative spanner~\cite[Section 2]{Althofer93dcg-sparseSpanners} to reduce large path distortions (\cref{alg0:greedy-spanner}).
	Then,
	for each pair $ (s,t) $
	the shortest path $ \path{\dsgc'}{\source}{\target} $ from source $ \source $ to target $ \target $
	in the temporary spanner $ \dsgc' $ is considered (in suitable order):
	in case the stretch in $ \dsgc' $ exceeds the constraint,
	edges and nodes from a shortest path in the original graph $ \dsg $ are added to both $ \dsgc' $ and the final spanner $ \dsgc $
	(Lines~\ref{alg0:add-edges-1}--\ref{alg0:add-edges-2}),
	otherwise,
	the shortest path in $ \dsgc' $ is directly added to the final spanner $ \dsgc $ (\cref{alg0:add-edges-3}).\footnote{
		\reviseral{We assume that the robot can compute shortest paths between terminals in a reasonable time as compared to the overall compression procedure.}}
	We refer to this subroutine as \texttt{build\_spanner}.}

\isExtended{}{
	\begin{algorithm}[t]
		\caption{Build spanner}
		\label{alg:build-spanner}
		\DontPrintSemicolon
		\KwIn{\dsgshortname $ \dsg $, 
		terminal pairs $ \pairs $, 
		user parameters $ \eps>0, p\in[0,1], \alpha > 2, c > 0 $.}
		\KwOut{\dsgshortname spanner $ \dsgc $.}
		$ \dsgc_1 \leftarrow n^{\eps} $-light initialization of $ \dsg $;\;\label{alg0:light-initialization}
		$ \dsgc_2 \leftarrow $ random sample of cross-layer edges of $ \dsg $ w.p. $ p $;\;\label{alg0:random-vertical-edges}
		$ \dsgc_3 \leftarrow $ $ \alpha $-multiplicative spanner of $ \dsg $;\;\label{alg0:greedy-spanner}
		$ \dsgc' \leftarrow \dsgc_1 \cup \dsgc_2 \cup \dsgc_3 $; \tcp*{to compute paths}
		$ \dsgc \leftarrow \pairs $;\;
		\ForEach(\tcp*[f]{sorted by $ \maxweight{\dsg}{\source}{\target} $}){$ (\source,\target) \in \pairs $}
		{
			\If{$ \dist{\dsgc'}{\source}{\target} > \dist{\dsg}{\source}{\target} + cn^{\frac{1-\eps}{2}}\alpha\maxweight{\dsg}{\source}{\target} $}
			{
				$ \dsgc' \leftarrow \dsgc' \cup \path{\dsg}{\source}{\target} $;\;\label{alg0:add-edges-1}
				$ \dsgc \leftarrow \dsgc \cup \path{\dsg}{\source}{\target} $;\;\label{alg0:add-edges-2}
			}
			\Else 
			{
				$ \dsgc \leftarrow \dsgc \cup \path{\dsgc'}{\source}{\target} $;\;\label{alg0:add-edges-3}
			}
		}
		\Return $ \dsgc $.
	\end{algorithm}
}

\isExtended{}{Before diving into the core of our compression algorithms,
	it is worth reinforcing the motivation to use graph spanners.
	\cref{alg:build-spanner} outputs a spanner with given maximal stretch of shortest paths, but
	does not guarantee that the produced spanner matches the desired node budget $ \bw $.
	As noted in~\autoref{sec:spanner},
	we cannot straightly apply a state-of-the-art spanner construction 
	because no real-time algorithm in the literature addresses the presence of an exact budget.
	However,
	building a spanner greatly reduces the graph to be compressed up front,
	retaining only nodes and edges which both are relevant for navigation
	and sharply enhance efficiency of our proposed compression strategies.
	Moreover,
	even though path distortion may be increased to satisfy communication requirements,
	the user-defined stretch guaranteed by the spanner algorithm allows us to start from a maximum desired distortion:
	hence,
	if the latter is chosen loose enough,
	we may expect that the spanner output by~\cref{alg:build-spanner} is already somewhat close to the communication constraint,
	so that additional distortion will not be very high.
	In particular,
	the spanner construction leverages overlapping portions of paths to select a handful of key edges and nodes,
	whereas other navigation-efficient constructions,
	such as the collection of all shortest paths,
	do not exploit the graph structure to enhance compression.
	Furthermore,
	there are no tight bounds for the size of shortest paths, 
	hence one cannot predict how much paths will be stretched in order to meet the communication budget.
}

\subsection{\titlecap{BUD-Lite: a bottom-up compression algorithm}}
\label{sec:alg-bottom-up}

The idea behind our first algorithm (\namebu, short for Bottom-Up D-Lite) is to
iteratively \emph{compress}
the \dsgshortname spanner produced by\isExtended{ \texttt{build\_spanner}}{~\cref{alg:build-spanner}}.
The mechanism is simple:
we progressively replace batches of nodes with their parents to reduce size,
while attempting to keep the stretch incurred by the shortest paths between terminals low.

To gain intuition,
consider~\autoref{fig:bottom-up}
that illustrates three steps of \namebu
on a toy \dsgshortname.\footnote{
	\revise{While~\autoref{fig:bottom-up} considers only place and room layers for the sake of visualization,
	our algorithm applies to \dsgshortnames with any number of layers.}
} 
Dashed edges and light-colored nodes are part of the full \dsgshortname $ \dsg $
and can be added to the compressed \dsgshortname $ \dsgc $.
The latter is marked with solid lines and brighter colors.
The first iteration of \namebu parses the path from $ \source $ to $ \target_1 $
and \reviseral{abstracts away 
place nodes $ P_3 $ and $ P_4 $,
which are replaced} with room node $ R_2 $ \reviseral{that} is a coarse representation of those places (top right).
Room $ R_1 $ is \reviseral{skipped} because \reviseral{it does not reduce budget} as compared to keeping $ P_1 $.
Place $ P_2 $ is not removed yet because it lies also on the path connecting pair $ (\source,\target_2) $,
\reviseral{while $P_5$ is still needed to connect $(\source,\target_1)$.}
Node $ P_2 $ is removed at the second round
when the path from $ \source $ to $ \target_2 $ is parsed and shortcut through place node $ P_1 $ and room node $ R_2 $ (bottom right).
The final step parses the last portion of the path connecting $ (\source,\target_1) $
and abstracts away the remaining places $ P_5 $ and $ P_6 $ under rooms $ R_2 $ and $ R_3 $ (bottom left).
An example on an actual \dsgshortname build from simulated data
is shown in~\autoref{fig:bottom-up-example},
where the room node is used to abstract several places.
\isExtended{}{More results on \dsgshortnames from simulated navigation data are provided in~\autoref{app:experiments}.}

\begin{figure}
	\centering
	\includegraphics[width=.8\linewidth]{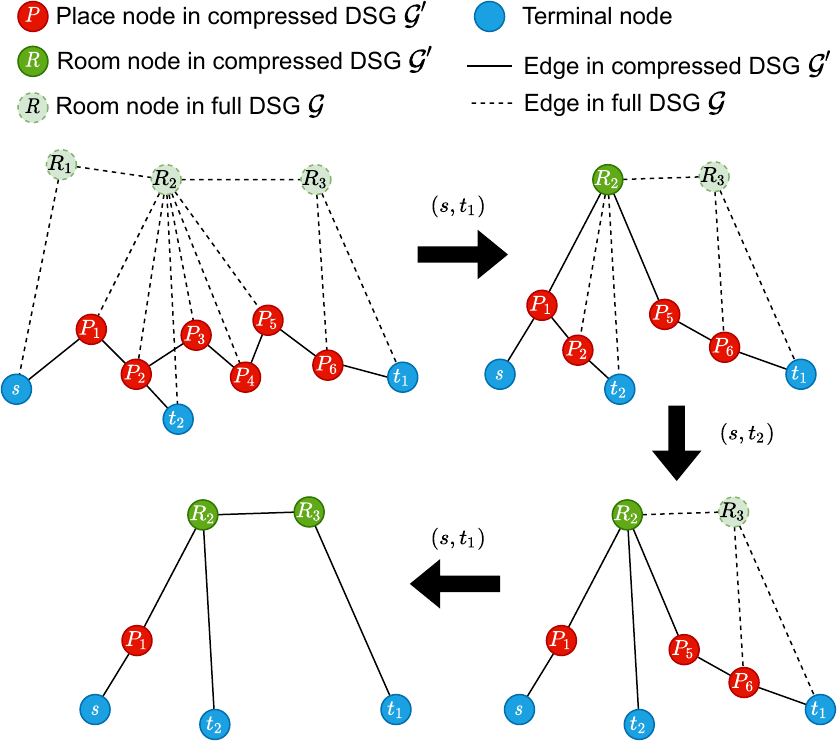}
	\caption{Illustration of the \namebu procedure with source $ \source $
		and targets $ \target_1, \target_2 $.
		At each iteration,
		place nodes in a shortest path between terminals are replaced by a room node.
		\isExtended{}{Nodes are removed when none of the terminal pairs $ (\source,\target_1) $ and
		$ (\source,\target_2) $ connects through them.
		Note that the final graph cannot be further pruned without disconnecting terminals.}
	}
	\label{fig:bottom-up}
\end{figure}

We formally introduce the compression procedure in \cref{alg:bottom-up}.
The compressed graph is \revise{initialized} as the
\dsgshortname spanner $ \dsgc $ output by\isExtended{ \texttt{build\_spanner}}{~\cref{alg:build-spanner}} (\cref{alg1:initialization}).
\revise{In the following,
the symbol $\layer{\dsg}{i}$ refers to \reviseral{the nodes within the $i$th layer of $\dsg$.}
For example,
\reviseral{$\layer{\dsg}{0}$ collects all place nodes of the full \dsgshortname $\dsg$.}}
Also,
$\path{\dsg}{\source}{\target}$ denotes the shortest path between $\source$ and $\target$ in $\dsg$.
\revise{The external loop at~\cref{alg1:foreach-layer} parses each layer $ \layer{\dsgc}{i} $ of $ \dsgc $,
starting from the bottom ($ i = 0 $) 
and moving to the upper layer $\layer{\dsgc}{i+1}$ after $\layer{\dsgc}{i}$ has been compressed (\cref{alg1:empty-layer}).}
At each iteration of the inner loop at~\cref{alg1:foreach-path},
the algorithm checks if the shortest path connecting terminals $ \source $ and $ \target $ in $\dsgc$
\reviseral{is traversed by nodes} in layer $\layer{\dsgc}{i}$
with the same parent node $ n\in\layer{\dsg}{i+1} $
(\cref{alg1:check-children}):
if \reviseral{this is the case},
\reviseral{such nodes with common parent}
are removed from $ \path{\dsgc}{\source}{\target} $ 
and replaced (\textit{compressed}) with their parent node $ n $
(\cref{alg1:replace-nodes-in-path}).\footnote{
	For consistency of navigation,
	we do not compress terminal nodes in our implementation,
	but this can be changed to accommodate the budget constraint.
}
Such a \textit{compression} in the graph
causes a corresponding \textit{stretch} of the actual path followed by the robot,\isExtended{}{~the amount of which depends on both the involved layer $ \layer{\dsgc}{i} $ and the number of compressed nodes,}
in light of the discussion in~\autoref{sec:compression-idea}.
The nested structure of~\cref{alg:bottom-up} looping over layers externally (\cref{alg1:foreach-layer})
and over paths internally (\cref{alg1:foreach-path})%
\isExtended{}{~adds one abstraction level at a time for each path 
	(the layer $ \layer{\dsgc}{i} $ is fixed in loop~\cref{alg1:foreach-path}),
	and hence}
stretches distances in a balanced fashion:
\isExtended{%
	after the inner loop parses all paths once,
	each path traversed by nodes in the finest layer is compressed by one coarse node
	(\eg a room node replacing place nodes),
	and such coarse nodes are in the same layer for all compressed paths
	(\eg during the first iteration of the outer loop,
	places can be compressed to rooms but not to buildings). 
	This means that the additional stretch is the same for all paths,
	on average: differences arise if the paths traverse unbalanced locations,
	\eg a path passes through a large room (which yields high distortion) while one through a small room (low distortion).
}{for example,
	if paths are made of places nodes,
	the first iteration of the inner loop
	compresses only one room for each path,
	so that at no point during the compression procedure a path
	is overly compressed with respect to other paths
	(\ie	it is not possible that a path is entirely abstracted to room nodes
	while another is kept with all places nodes).
	In general,
	this allows fine-scale spatial information to be retained as long as possible,
	and coarser layers (\eg buildings) to be used only after finer layers (\eg rooms) have been fully exploited for all paths
	(\ie
	building nodes can be used only if all paths contain room nodes and no places nodes,
	possibly except for terminal nodes).
}%
To ensure paths are always feasible,
\isExtended{nodes are}{we use a data structure $ \dict $ to track which paths are using nodes in $ \dsgc $ (\cref{alg1:build-dict}):
	only when a node is traversed by no path (Lines~\ref{alg1:delete-node-start} to~\ref{alg1:delete-node-end}),
	it is} removed\isExtended{ when they are unused \reviseral{(Lines~\ref{alg1:delete-node-start} to~\ref{alg1:delete-node-end})}.}{.}
\reviseral{For \namebu to terminate,
the budget $B$ has to accommodate at least minimal-cardinality paths between terminals,
which we assume holds true.}

\begin{figure}
	\centering
	\begin{minipage}[l]{.5\linewidth}
		\centering
		\includegraphics[width=.7\linewidth]{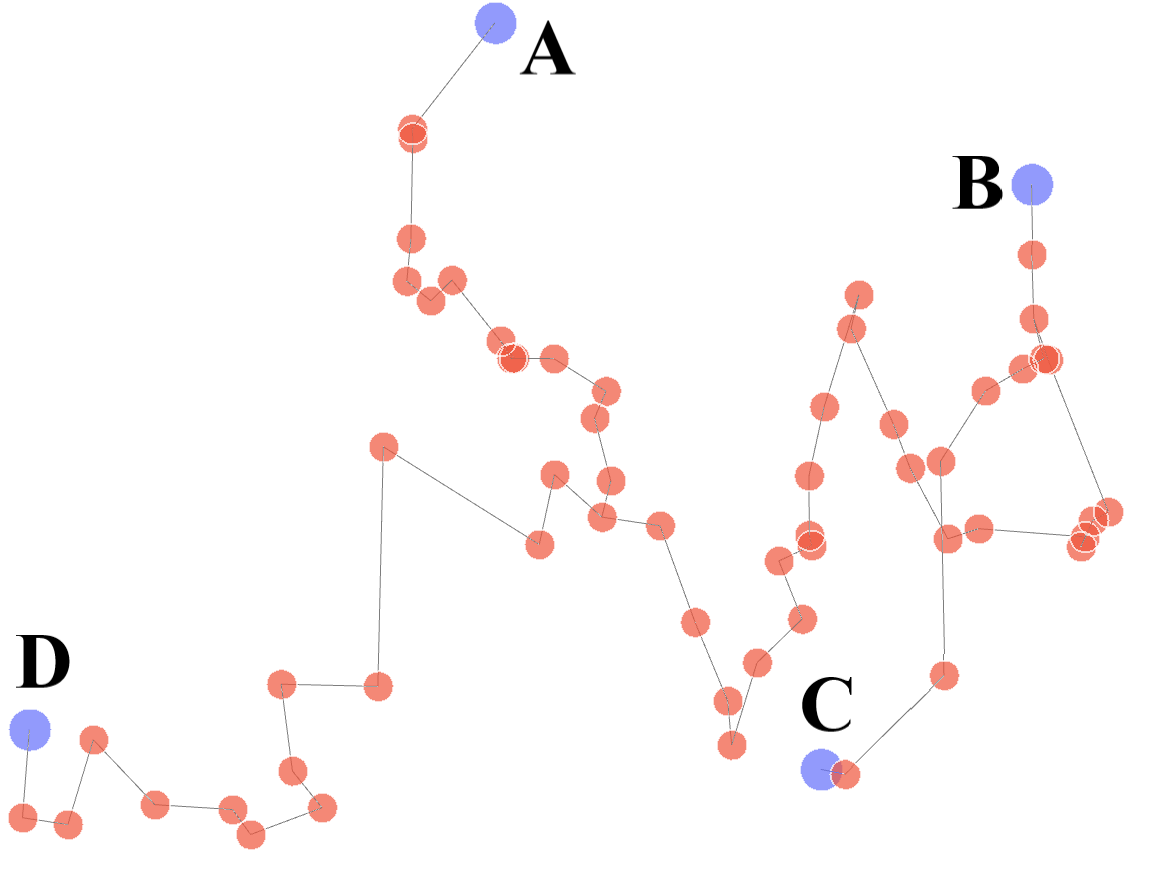}
	\end{minipage}%
	\hfil
	\begin{minipage}[r]{.5\linewidth}
		\centering
		\includegraphics[width=.7\linewidth]{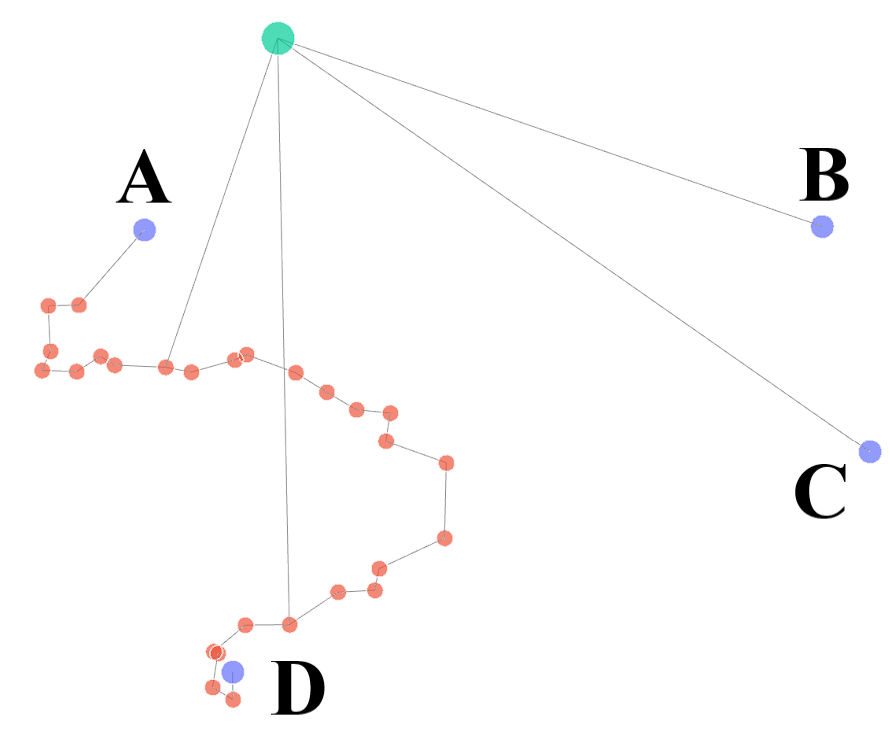}
	\end{minipage}
	\caption{Initial (left) and final \dsgshortname (right).
		Terminal nodes \revise{(\textbf{A}, \textbf{B}, \textbf{C}, and \textbf{D})} are in blue,
		place nodes in red,
		and the room node in green.
	}
	\label{fig:bottom-up-example}
\end{figure}

\begin{algorithm}
	\caption{\namebu}
	\label{alg:bottom-up}
	\DontPrintSemicolon
	\KwIn{\dsgshortname $ \dsg $, terminal pairs $ \pairs $, node budget $ \bw $.}
	\KwOut{Compressed \dsgshortname $ \dsgc $.}
	$ \dsgc \leftarrow $ \verb|build_spanner|$ \lr\dsg,\pairs\rr $; \tcp*{initialization}\label{alg1:initialization}
	\ForEach(\tcp*[f]{track node usage}){$ n\in\nodes[\dsgc] $}
	{
		$ \dict[n] \leftarrow \{(\source,\target) \in\pairs : n\in\path{\dsgc}{\source}{\target}\}$;\; \label{alg1:build-dict}
	}
	\For(\tcp*[f]{parse layers bottom-up}){$i = 0,\dots,L-1$\label{alg1:foreach-layer}}	
	{
		\While(\tcp*[f]{parse layer till empty}){$\layer{\dsgc}{i} \neq \emptyset$\label{alg1:empty-layer}}
		{
			\ForEach(\tcp*[f]{parse path from $ s $ to $ t $}){$ (s,t) \in \pairs $\label{alg1:foreach-path}} 
			{
				\If{$\exists n\in\layer{\dsg}{i+1}:\path{\dsgc}{s}{t}\cap\children[\dsgc](n)\neq\emptyset$\label{alg1:check-children}}
				{
					replace $ \path{\dsgc}{s}{t}\cap\children[\dsgc](n) $ with $ n $ in $ \path{\dsgc}{s}{t} $;\; \label{alg1:replace-nodes-in-path}
					$ \dict[n] \leftarrow \dict[n] \cup {(s,t)} $;\;
					\ForEach{$ m\in\path{\dsgc}{s}{t}\cap\children[\dsgc](n) $}
					{
						$ \dict[m] \leftarrow \dict[m] \setminus {(s,t)} $;\; \label{alg1:delete-node-start}
						\If(\tcp*[f]{\reviseral{prune \dsgshortname}}){$ \dict[m] =\emptyset $}
						{
							$ \nodes[\dsgc] \leftarrow \nodes[\dsgc]\setminus \lb m\rb $;\;
							$ \reviseral{\edges[\dsgc] \leftarrow \edges[\dsgc]\setminus \edges[\dsgc](m)} $;\; \label{alg1:delete-node-end}
						}
					}
					\If(\tcp*[f]{budget is met}){$|\nodes[\dsgc]| \le \bw $}
					{
						\Return $ \dsgc $.
					}
				}
			}
		}
	}
\end{algorithm}


\myParagraph{Performance bound}
We now provide an analytical bound on the worst-case stretch that is incurred by every shortest path after running \namebu.
First,
we provide two definitions that are instrumental to the understanding of the bound. 

\begin{definition}[Ancestor]
	The \emph{($i$th) ancestor} $\ancestor{n}{i}{\dsg}$ of node $n\in\layer{\dsg}{i_0}$ 
	is the unique node in layer $\layer{\dsg}{i}$, $i>i_0$,
	that is connected to $n$ by a path composed of only cross-layer edges.
\end{definition}

In words,
the ancestors of node $n$ 
are coarse representations of $n$ in upper layers.
\reviseral{For example,
the first two ancestors of a place node are its room and building nodes,
respectively.} 

\begin{definition}[Diameter]
	For any node $n\in\nodes[\dsg]$,
	its \emph{diameter} $\diam{n}{\dsg}$ is
	the maximum cardinality of all shortest paths connecting two children nodes of $n$ in $\dsg$,
	that is,
	\begin{equation}\label{eq:diameter}
		\diam{n}{\dsg} \doteq \max \lb |\path{\dsg}{c_1}{c_2}| : \reviseral{c_1, c_2 \in \children[\dsg](n)} \rb, 
	\end{equation}
	where $|\path{\dsg}{c_1}{c_2}|$ denotes the number of nodes in $\path{\dsg}{c_1}{c_2}$.
\end{definition}

In words,
the diameter of a node describes how ``large'' the node is when expanded into its children in the layer below.

We now assume the following bounds on quantities associated with the original \dsgshortname $\dsg$.
Recall that any edge $(m,n) \in \edges[\dsg]$ with $m,n\in\nodes[\dsg]$ is assigned a weight $\weight{\dsg}{m}{n}$.

\begin{ass}[\dsgshortname bounds]\label{ass:bounds}
	For any layer $i \in \lb1,\dots,L\rb$,
	\begin{align}
		\begin{split}
			\wmax{i} &\doteq \max \lb \weight{\dsg}{m}{n}:m,n\in\layer{\dsg}{i} \rb,\\
			\wcmax{i}&\doteq \max \lb \weight{\dsg}{m}{n}:m\in\layer{\dsg}{i-1},n\in\layer{\dsg}{i} \rb,\\
			\mmin{i} &\doteq \min \lb \left|\path{\dsg}{\ancestor{\source}{i}{\dsg}}{\ancestor{\target}{i}{\dsg}}\right| : (\source,\target)\in\pairs \rb,\\
			\bmin{i} &\doteq \min \lb \diam{n}{\dsg} : n \in \layer{\dsg}{i} \rb.
		\end{split}
	\end{align}
\end{ass}


\begin{itemize}
	\item $\wmax{i}$ is the maximum weight of edges in layer $\layer{\dsg}{i}$;
	\item $\wcmax{i}$ is the maximum weight of cross-layer edges between layers $\layer{\dsg}{i-1}$ and $\layer{\dsg}{i}$;
	\item $\mmin{i}$ is the minimum cardinality of a shortest path between the $i$th ancestors of every two connected terminals; 
	\item $\bmin{i}$ is the minimum diameter of nodes in layer $\layer{\dsg}{i}$.
\end{itemize}

Equipped with the definition above,
we can bound
the distortion on the compressed \dsgshortname $\dsgc$ provided by \namebu.

\begin{prop}[Worst-case \namebu stretch]\label{prop:bound-budlite}
	After $k$ total iterations of the innermost loop in~\cref{alg:bottom-up},
	the distance between any two terminals in the compressed graph $\dsgc$ is
	\begin{multline}\label{eq:bu-stretch-bound}
		\hspace{-3mm}\dist{\dsgc}{\source}{\target} 
		\;\le\; 
		2\sum_{i=1}^{\lmax}\wcmax{i} + \lr\mmin{\lmax-1}-\alpha_k\bmin{\lmax}\rr\wmax{\lmax-1} \\
		+\alpha_k\wmax{\lmax}, \quad \forall (s,t)\in\pairs,
	\end{multline}
	where
	\allowdisplaybreaks
	\begin{align}\label{eq:bounds-assumptions}
		\alpha_k&\doteq \ceil*{\dfrac{k}{|\pairs|}-\sum_{i=\lz}^{\lmax-1}\mmin{i}},\\
		\lmax	&\doteq\max\lb\ell:k>|\pairs|\sum_{i=\lz}^{\ell-1}\mmin{i} \rb,\\
		\lz 	&\doteq \! \begin{multlined}[t]
			\max\lb\ell:\min_{(\source,\target)\in\pairs}\lr\dist{\dsg}{\source}{\target}+\beta\maxweight{\dsg}{\source}{\target}\rr\ge\right. \\
			\left. 2\sum_{i=1}^{\ell}\wcmin{i} + \mmin{\ell}\wmin{\ell}\rb.
		\end{multlined}
	\end{align}
\end{prop}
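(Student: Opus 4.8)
The plan is to prove \cref{prop:bound-budlite} by tracking, as a function of the iteration counter $k$, the worst-case structure of the shortest path $\path{\dsgc}{\source}{\target}$ between an arbitrary terminal pair, and then bounding its length as the sum of contributions from three disjoint portions: the cross-layer ``ascent/descent'' between the place level and the current coarsest layer, the nodes already abstracted to that coarsest layer, and the residual finer-layer nodes not yet abstracted. First I would show that, because \namebu processes the graph in the balanced order dictated by the nested loops of \cref{alg:bottom-up} (the layer index $i$ in the outer loop, the terminal pairs in the inner loop), after $k$ innermost iterations every pair has undergone essentially the same ``depth'' of compression, up to a rounding term; this balanced-progress property is what lets a single bound hold uniformly $\forall(\source,\target)\in\pairs$ even though the per-pair paths differ.

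The second ingredient is the initialization via \texttt{build\_spanner}. Since the spanner guarantees the additive stretch $\dist{\dsgc}{\source}{\target}\le\dist{\dsg}{\source}{\target}+\distortion\maxweight{\dsg}{\source}{\target}$ for every pair, I would use $\lz$ as the deepest layer whose minimal round-trip cost, $2\sum_{i=1}^{\lz}\wcmin{i}+\mmin{\lz}\wmin{\lz}$, is still dominated by this spanner guarantee. This identifies $\lz$ as the base layer from which genuine additional stretch begins to accrue: the abstraction of layers below $\lz$ cannot push a path beyond what the spanner already tolerates, so both the iteration count and the accumulated stretch are measured starting at $\lz$, which explains why the lower summation index in $\alpha_k$ and in the definition of $\lmax$ is $\lz$ while the cross-layer sum still runs from the place level.

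Next I would quantify the iteration budget per layer. Abstracting a finer layer $i-1$ into a coarser layer $i$ replaces the path's $(i-1)$-ancestor traversal by its $i$-ancestor traversal, and since one innermost iteration creates exactly one layer-$i$ node (the parent $n\in\layer{\dsg}{i}$ that absorbs its children on the path), this phase is paced by $\mmin{i}$ node creations per pair, the minimal cardinality of the layer-$i$ ancestor path. Summing over the completed phases creating layers $\lz,\dots,\lmax-1$ and multiplying by $|\pairs|$ recovers the threshold in the definition of $\lmax$, so that after $k$ iterations the algorithm is midway through the phase creating layer $\lmax$ (outer index $\lmax-1$), having completed $\alpha_k=\ceil*{k/|\pairs|-\sum_{i=\lz}^{\lmax-1}\mmin{i}}$ such creations per pair. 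I would then assemble the length: the ascent and descent between the place level and layer $\lmax$ contribute at most $2\sum_{i=1}^{\lmax}\wcmax{i}$; the $\alpha_k$ newly created coarse nodes contribute at most $\alpha_k\wmax{\lmax}$; and the still-uncompressed nodes at layer $\lmax-1$, whose count is the initial $\mmin{\lmax-1}$ minus the children each coarse node removes, contribute at most $(\mmin{\lmax-1}-\alpha_k\bmin{\lmax})\wmax{\lmax-1}$. Adding these three terms yields exactly~\eqref{eq:bu-stretch-bound}.

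The hard part will be the residual-node accounting in the last term, namely certifying that each coarse node created at layer $\lmax$ eliminates at least $\bmin{\lmax}$ finer nodes from the path, so that $\alpha_k\bmin{\lmax}$ is a valid lower bound on the number of layer-$(\lmax-1)$ nodes removed and hence $(\mmin{\lmax-1}-\alpha_k\bmin{\lmax})$ a valid upper bound on those remaining. This requires controlling, through the node diameter $\diam{n}{\dsg}$, how a shortest path traverses the children region of an abstracted node, and coupling that with the balanced-progress claim so the worst case is realized simultaneously across pairs. I would also have to dispatch the ceiling and boundary effects---partial phases, the transition at $\lz$, and pairs whose ancestor paths strictly exceed the minima $\mmin{i}$---to guarantee the single uniform bound is never violated; these become routine once the balanced-progress and diameter-accounting lemmas are established.
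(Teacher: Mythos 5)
Your proposal follows essentially the same route as the paper's proof: the same three-term decomposition of the worst-case path (cross-layer edges up to layer $\lmax$, the $\alpha_k$ coarse nodes weighted by $\wmax{\lmax}$, and the residual layer-$(\lmax-1)$ portion bounded via $\mmin{\lmax-1}-\alpha_k\bmin{\lmax}$), the same characterization of $\lz$ through the spanner's additive stretch and the lower bound $2\sum_{i=1}^{\ell}\wcmin{i}+\mmin{\ell}\wmin{\ell}$, and the same per-pair, per-layer iteration counting that yields $\lmax$ and $\alpha_k$. The only difference is cosmetic: you charge $\mmin{i}$ to the phase that \emph{creates} layer $i$ while the paper charges it to the phase that \emph{parses} layer $i$, yet the resulting threshold and final bound are identical, and the paper's own proof is no more detailed than your plan on the diameter-based residual accounting you flag as the hard part.
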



\begin{proof}
	See \isExtended{the technical report~\cite[Appendix~D]{Chang22arxiv-dlite}}{\autoref{app:bound}}.
\end{proof}

\revise{In words,
$\lz$ is the index of the bottom layer in the initial spanner $\dsgc$ in~\cref{alg1:initialization} (excluding terminals);
$\lmax$ is the index of the top (coarsest) layer reached by \namebu after $k$ total iterations; 
$\alpha_k$ is the number of nodes in the latter layer that have been added to the compressed graph after $k$ iterations.
In the bound~\eqref{eq:bu-stretch-bound}, 
the first term is the stretch due to cross-layer edges connecting the terminals to nodes in the upper layers,
while the other terms are the stretch due to the shortest path passing partially across the coarsest layer $\layer{\dsg}{\lmax}$ (third term)
and partially across layer $\layer{\dsg}{\lmax-1}$ (second term).}

\subsection{\titlecap{TOD-Lite: a Top-Down Expansion Algorithm}}
\label{sec:alg-top-down}

\begin{figure}
	\centering
	\includegraphics[width=.9\linewidth]{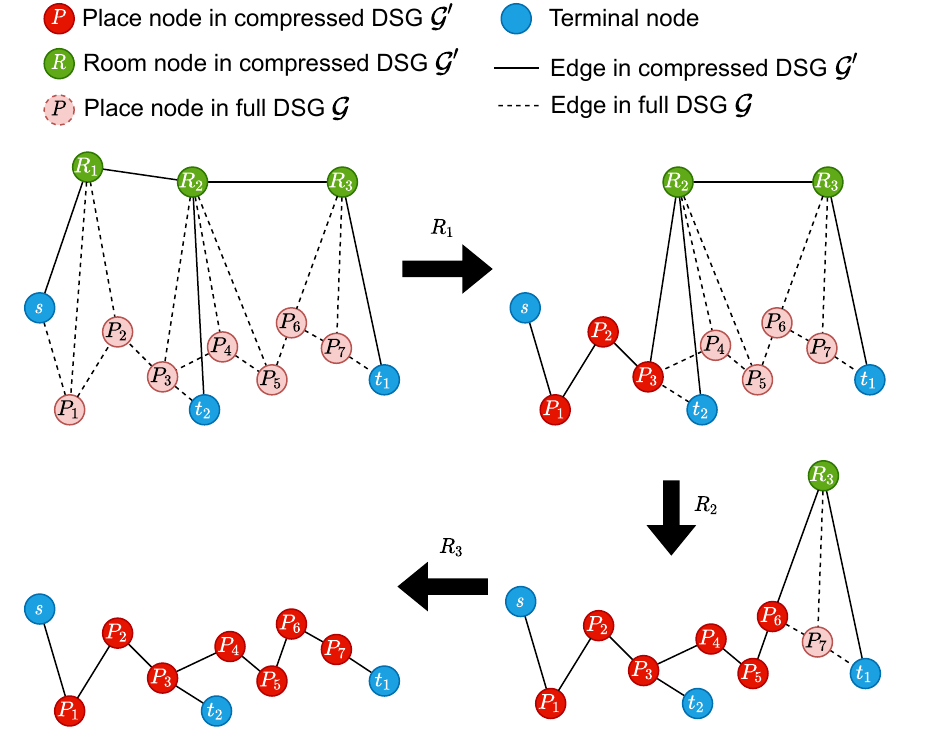}
	\caption{Illustration of the \nametd expansion procedure with one source $ \source $
		and two targets $ \target_1 $ and $ \target_2 $.
		At each iteration,
		a room node is expanded and replaced with its children place nodes.
		Adjacent nodes may be added to ensure connectivity
		(\eg $ P_3 $ at first iteration).
	}
	\label{fig:top-down}
\end{figure}

This section presents our second greedy algorithm.
Symmetrically to the bottom-up approach of~\cref{alg:bottom-up},
the idea behind \nametd (short for TOp-down D-Lite)
is to exploit the \dsgshortname hierarchy
by expanding node children to iteratively increase spatial granularity of the compressed graph (\autoref{fig:top-down}).

\revise{The idea of \nametd is depicted with a toy example in~\autoref{fig:top-down},
	where room nodes $ R_1 $, 
$ R_2 $,
and $ R_3 $ are
progressively replaced with their respective children (place nodes).
The initial condition (top left)
contains the minimum set of nodes that guarantee connectivity between terminals,
and it features 
coarse spatial abstractions through the retained nodes.}

\revise{We formally describe \nametd in~\cref{alg:top-down}.}
During initialization,
\cref{alg:top-down}
builds a spanner $ \dsgc_\text{target} $ of the \dsgshortname with\isExtended{ \texttt{build\_spanner}}{~\cref{alg:build-spanner}},
which is used as target for the final compressed graph $ \dsgc $
(\cref{alg2:build-target-spanner}). 
Then,
it populates a ``hierarchical spanner'' $\mathcal{H}$ (\cref{alg2:spanner-hierarchy}):
this is simply a graph obtained from the original \dsgshortname $ \dsg $
by keeping the spanner $ \dsgc_\text{target} $
plus nodes and edges encountered starting from $ \dsgc_\text{target} $
and going up the \dsgshortname hierarchy
all the way to the top layer.
Elements unrelated to the ancestors of $ \dsgc_\text{target} $ are removed.
For example,
if $ \dsgc_\text{target} $ is made of place nodes,
$\mathcal{H}$ includes $ \dsgc_\text{target} $,
the room nodes associated with those places (with cross-layer edges),
and possibly nodes above in the hierarchy,
\eg the buildings containing those rooms.
Graph $ \mathcal{H} $ is used to expand nodes from coarser to finer layers,
as explained next.
\isExtended{%
	The expansion priority is given by the number of paths that traverse a node (Lines~\ref{alg2:build-dictionary-1} to~\ref{alg2:build-dictionary-2}):
	\reviseral{this way, 
		expanding a node with high priority restores the spatial resolution of many paths (\ie those traversing that node).}
	}{%
	To define an expansion priority for nodes in the same layer,
	a data structure $ \dict $ stores how many paths pass though each node in the graph,
	including both original paths in the target spanner (\cref{alg2:build-dictionary-1})
	and path abstractions in upper layers (\cref{alg2:build-dictionary-2}):
	for example,
	the priority of a room node $ R $ is given by the number of paths actually passing through $ R $ in $ \dsgc_\text{target} $
	and of paths traversing places nodes associated with $ R $.}

The main phase is an iterative top-down expansion through the hierarchical spanner $ \hierarchy $.
The output graph $ \dsgc $ is initialized with terminal nodes
and paths
\revise{that connect them under 
	minimal communication budget:
	these are obtained by connecting each source-target pair to their common ancestor at minimal distance through cross-layer edges (Lines~\ref{alg2:cheapest-spanner-terminals} to~\ref{alg2:cheapest-spanner-cross-layer})}.
Then,
starting from the top layer $ \layer{\dsgc}{L} $
and going down one layer at a time (\cref{alg2:expansion-layers}),
each node in $ \dsgc $ is \emph{expanded} 
(\reviseral{\cref{alg2:expansion-nodes,alg2:expansion-nodes-add,alg2:expansion-nodes-remove}})
till such operation is infeasible (\cref{alg2:termination}).
{\revise{In particular,
	if a node $ n \in\dsgc$ has a set of children $ \children[\hierarchy](n) $ in the hierarchical spanner $ \hierarchy $,
	then~\cref{alg2:expansion-nodes-remove} removes \reviseral{$ n $ from $\dsgc$}
	and~\cref{alg2:expansion-nodes-add} adds to $ \dsgc $ nodes in $ \children[\hierarchy](n) $ and their incident edges \reviseral{$\edges[\hierarchy](\children[\hierarchy](n))$}.}

Expanding nodes gradually restores the geometric granularity of the \dsgshortname spanner,
because a spatially coarse representation (\eg room node) is replaced by 
a group of nodes with fine resolution (\eg place nodes).
This expansion comes at the price of heavier communication burden.
Nonetheless,
using the hierarchical spanner allows us to narrow the expansion procedure 
to a small set of navigation-relevant nodes,
both saving runtime and helping meet communication constraints.

\begin{algorithm}
	\caption{\nametd}
	\label{alg:top-down}
	\DontPrintSemicolon
	\KwIn{\dsgshortname $ \dsg $, terminal pairs $ \pairs $, node budget $ \bw $.}
	\KwOut{Compressed \dsgshortname $ \dsgc $.}
	$ \dsgc_\text{target} \leftarrow $ \verb|build_spanner|$ \lr\dsg,\pairs\rr $;\; \label{alg2:build-target-spanner}
	$ \hierarchy \leftarrow $ hierarchical spanner from $ \dsgc_\text{target} $;\;\label{alg2:spanner-hierarchy}
	\ForEach(\tcp*[f]{for expansion priority}){$ n\in\nodes[\dsgc_\text{target}] $\label{alg2:build-dictionary-1}}
	{	
		$ \dict[n] \leftarrow \big|\{(\source,\target) \in\pairs : n\in\path{\dsgc_\text{target}}{\source}{\target}\}\big|$;\;
	}
	\For{$i = 1,\dots,L$}
	{
		\ForEach{$ n\in\layer{\hierarchy}{i} $} 
		{
			$ \dict[n] \leftarrow \sum_{n'\in\nodes[\hierarchy]\lr n\rr } \dict[n'] $;\;
			\label{alg2:build-dictionary-2}
		}
	}
	$ \nodes[\dsgc] \leftarrow \pairs $; \tcp*{add terminals}\label{alg2:cheapest-spanner-terminals}
	\ForEach(\tcp*[f]{add cheapest path from $ s $ to $ t $}){$ (s,t)\in\pairs $}
	{
		$ a \leftarrow $ lowest common ancestor of $ \source $ and $ \target $ in $ \hierarchy $;\;
		$ \nodes[\dsgc] \leftarrow \nodes[\dsgc] \cup \lb a\rb $;\;
		\reviseral{$\edges[\dsgc]\leftarrow$ edges connecting $ s $ and $ t $ with $ a $ in $ \dsgc $};\;\label{alg2:cheapest-spanner-cross-layer}
	}
	\For(\tcp*[f]{parse layers top-down}){$i = L,\dots,1$\label{alg2:expansion-layers}}
	{
		\ForEach(\tcp*[f]{sorted by $ \dict[n] $}){$ n \in\layer{\dsgc}{i}$}
		{
			\If{can expand $ n $ without exceeding $ B $\label{alg2:expansion-nodes}}
			{
				$ \nodes[\dsgc] \leftarrow  \nodes[\dsgc] \setminus \{n\} $; $ \edges[\dsgc] \leftarrow  \edges[\dsgc] \setminus \edges[\dsgc](n) $;\;\label{alg2:expansion-nodes-remove}
				$ \nodes[\dsgc] \leftarrow \nodes[\dsgc] \cup \children[\hierarchy](n) $; $\edges[\dsgc]\leftarrow\edges[\dsgc]\cup\edges[\hierarchy](\children[\hierarchy](n))$;\;\label{alg2:expansion-nodes-add}
			}
		}
		\If{no node in $\layer{\dsgc}{i}$ has been expanded\label{alg2:termination}}
		{
			\Return $ \dsgc $.\;
		}
	}

\end{algorithm}

Note that,
with enough communication resources,
\nametd would exactly output the target spanner $ \dsgc_\text{target} $.
Under limited budget,
some nodes in $ \dsgc_\text{target} $ cannot be expanded,
\eg a room may be used as a coarse representation of its places.
\isExtended{}{Experimental results of \nametd are provided in~\autoref{app:experiments}.}

\isExtended{}{
	\myParagraph{Comparison with~\cite{Larsson20tro-qTree}}
	The iterative expansion of nodes from coarser to finer layers used by \nametd
	resembles the approach used in~\cite{Larsson20tro-qTree}.
	However,
	there are fundamental differences between these two methodologies.
	First,
	we expand nodes along a preexisting semantic hierarchical structure (the \dsgname),
	while the hierarchy in~\cite{Larsson20tro-qTree} simply emerges from the regular geometry
	of the environment (such as a grid map in~\cite{Larsson20tro-qTree,Larsson21ral-infoTheoreticAbstractionsPlanning}),
	without awareness of semantics or physical quantities such as navigation time to move through coarse- and fine-scale maps.
	Second,
	our expansion leverages a target spanner computed up front and is guided by the navigation task,
	in particular by the stretch incurred by the shortest paths,
	while nodes in~\cite{Larsson20tro-qTree} are expanded based on an information-theoretic cost
	to be defined by suitable probability distributions
	whose support and density function change with expansions but are initially defined on the full graph to be compressed.
	More details about the algorithm in~\cite{Larsson20tro-qTree} are given in~\autoref{sec:experiments}.
}

\subsection{Discussion: \namebu \vs \nametd}\label{sec:bud-vs-tod}
\namebu compresses the \dsgshortname in a more granular fashion compared to \nametd:
that is,
it adds distortion to paths more slowly,
because it compresses a limited portion of one path at a time.
On the other hand,
the expansion strategy of \nametd restores all children of a parent node at once.
This difference makes \namebu generally slower but able to reach a final graph size closer to the budget,
whereas \nametd is typically faster but retains fewer nodes and leads to more distorted paths.

Those differences make the two strategies suited to different scenarios.
For instance,
a map that includes both large and small rooms may cause \nametd to get stuck after expanding the nodes with the largest number of children,
while the path-wise compression of \namebu is less sensitive to heterogeneous maps.
On the other hand,
to compress a large but homogeneous map with many relevant locations,
one may use \nametd to favor compression speed against a slightly worse result.

\section{Experiments}
\label{sec:experiments}

This section shows that 
our method retains information for efficient navigation
while meeting the communication budget constraint. 
We also show that the algorithms run in real time. 

\subsection{Experimental Setup}

Besides benchmarking \name against the solution to~\eqref{eq:optimization-problem}
(label: ``Optimum'') found via integer linear programming (ILP),
we also adapt \reviseral{and compare the compression strategy introduced in~\citep{Larsson20tro-qTree}}
(label: ``IB''), 
as discussed below. 

\myParagraph{Q-Tree search adaptation} 
The compression approach in~\citep{Larsson20tro-qTree} builds on the Information Bottleneck (IB)~\citep{Tishby01accc-IB}.
This approach aims to find a compact representation $T$ of a random variable $X$ by solving a relaxed version of the IB problem,
\begin{equation}
	\min_{p(T|X)} I(T; X) - \beta I(T; Y),
\end{equation}
where $I(T;X)$ is the mutual information between $T$ and $X$ 
and $I(T;Y)$ 
represents the information that $ T $ retains about a third variable $ Y $ that encodes relevant information about $ X $.
Parameter $\beta$ can be seen as a knob to trade amount of relevant information retained in $T$ for compression rate.

To adapt this approach to navigation-oriented \dsgshortname compression
\reviseral{(since the Q-tree does not encode connectivity within a layer of the scene graph)},
we define a uniform distribution $ p(x) $ over the place nodes.
Next, 
we associate $Y$ with shortest paths between terminals:
if place $x_i$ is on the shortest path $y_j$,
then $p(y_j|x_i) = 1$.
From the place layer,
we build $ X $ by propagating $p(x)$ and $p(y|x)$ to upper layers by a weighted sum (cf.~\citep{Larsson20tro-qTree}).
We manually add the terminals if
they are not automatically added,
and in view of~\eqref{eq:optimization-problem}
we use the number of nodes
as a stopping condition besides the one in~\citep{Larsson20tro-qTree}.

\myParagraph{Simulator}
We showcase the online operation of \name in the 
Office environment of the uHumans2 simulator (\cref{fig:cover})~\citep{Rosinol21ijrr-Kimera},
with 4 scenarios 
featuring different distances between navigation goal and starting position of the robot.

The queried robot $ \robotRespond $ sending the compressed \dsgshortname 
\isExtended{is given potential locations the querying robot $ \robotQuery $ may come from.
}{%
	~has no exact knowledge of the location of the querying robot $ \robotQuery $,
	and is only given some potential locations. }%
The places closest to these source locations along with the place closest to the navigation goal are the terminals of \name.
In the short and medium sequences, $ \robotQuery $ gets two putative source locations, hence three total terminals.
In the two long sequences, $ \robotQuery $ gets three putative source locations, hence four total terminals.
For all sequences, \reviseral{we choose $ 60 $ nodes as communication budget,
	which is $ 1.6\% $ of the original \dsgshortname.} 

Upon receiving the compressed \dsgshortname,
robot $ \robotQuery $ finds the place node $ \source $
closest to its location and
computes the shortest path on the compressed \dsgshortname from $ \source $ to the place node $ \target $ that represents the goal.
Robot $ \robotQuery $ treats the nodes along the shortest path as navigation waypoints.
We combine waypoint following with the ROS navigation stack for local obstacle avoidance:
the latter is needed 
\isExtended{\revise{to allow navigation in the areas where low-level geometric information (\ie places nodes) has been abstracted away during the \dsgshortname compression}.}{%
	where free-space locations are not available to $ \robotQuery $ 
	(\ie for place nodes that are not communicated for a portion of the map).}


\begin{table}[t!]
	\footnotesize
	\setlength{\tabcolsep}{1pt}
	\centering
	\caption{Summary of results.
		Arrows indicate that lower is better.
		The table reports mean and standard deviation across three runs.
	}
	\label{tab:ablation}
	\begin{tabular}{cl |c|cccc}
		\toprule
		& & Full & Optimum & IB & \namebu & \nametd \\
		\midrule
		\multirow{4}{*}{short}
		& Comp[\si{\second}] $\downarrow$ & 0 $\pm$ 0& 247 $\pm$ 0& \textbf{1 $\pm$ 0}& \textbf{3 $\pm$ 0}& \textbf{3 $\pm$ 0}\\
		& Nom[\si{\second}] $\downarrow$ & {11 $\pm$ 0}& \textbf{11 $\pm$ 0}& 43 $\pm$ 0& \textbf{11 $\pm$ 0}& \textbf{11 $\pm$ 0}\\
		& Mis[\si{\second}] $\downarrow$ & 64 $\pm$ 8& 56 $\pm$ 5& 115 $\pm$ 16& \textbf{62 $\pm$ 2}& \textbf{59 $\pm$ 3}\\
		& Size(\#) ($\le\!60$) & 3814 $\pm$ 0& 51 $\pm$ 0& 60 $\pm$ 0& 49 $\pm$ 0& 49 $\pm$ 0\\
		\midrule
		\multirow{4}{*}{medium}
		& Comp[\si{\second}] $\downarrow$ & 0 $\pm$ 0& 294 $\pm$ 1& \textbf{1 $\pm$ 0}& \textbf{3 $\pm$ 0}& \textbf{3 $\pm$ 0}\\
		& Nom[\si{\second}] $\downarrow$ & {18 $\pm$ 0} & \textbf{18 $\pm$ 0} & 42 $\pm$ 0& \textbf{18 $\pm$ 0} & 29 $\pm$ 0\\
		& Mis[\si{\second}] $\downarrow$ & 87 $\pm$ 7& \textbf{77 $\pm$ 2}& 92 $\pm$ 22& \textbf{85 $\pm$ 8}& 144 $\pm$ 20\\
		& Size(\#) ($\le\!60$) & 3814 $\pm$ 0& 56 $\pm$ 0& 60 $\pm$ 0& 48 $\pm$ 0& 58 $\pm$ 0\\
		\midrule
		\multirow{4}{*}{long1}
		& Comp[\si{\second}] $\downarrow$ & 0 $\pm$ 0& - & \textbf{2 $\pm$ 0}& \textbf{3 $\pm$ 0}& \textbf{3 $\pm$ 0}\\
		& Nom[\si{\second}] $\downarrow$ & {27 $\pm$ 0}& - & $\infty$   & \textbf{31 $\pm$ 0}& 39 $\pm$ 0\\
		& Mis[\si{\second}] $\downarrow$ & {134 $\pm$ 5} & - & $\infty$   & \textbf{167 $\pm$ 18} & 273 $\pm$ 22\\
		& Size(\#) ($\le\!60$) & 3814 $\pm$ 0& - & 60 $\pm$ 0& 58 $\pm$ 0& 20 $\pm$ 0\\
		\midrule
		\multirow{4}{*}{long2}
		& Comp[\si{\second}] $\downarrow$ & 0 $\pm$ 0& - & \textbf{2 $\pm$ 0}& \textbf{3 $\pm$ 0}& \textbf{3 $\pm$ 0}\\
		& Nom[\si{\second}] $\downarrow$ & {32 $\pm$ 0} & - & 36 $\pm$ 0& \textbf{33 $\pm$ 0} & 34 $\pm$ 0\\
		& Mis[\si{\second}] $\downarrow$ & {150 $\pm$ 6} & - & 218 $\pm$ 20& \textbf{164 $\pm$ 30} & 291 $\pm$ 39\\
		& Size(\#) ($\le\!60$) & 3814 $\pm$ 0& - & 60 $\pm$ 0& 60 $\pm$ 0& 9 $\pm$ 0\\
		\midrule
	\end{tabular}
\end{table}

\begin{figure*}
	\centering
	\includegraphics[width=0.9\linewidth, trim={200 20 250 50},clip]{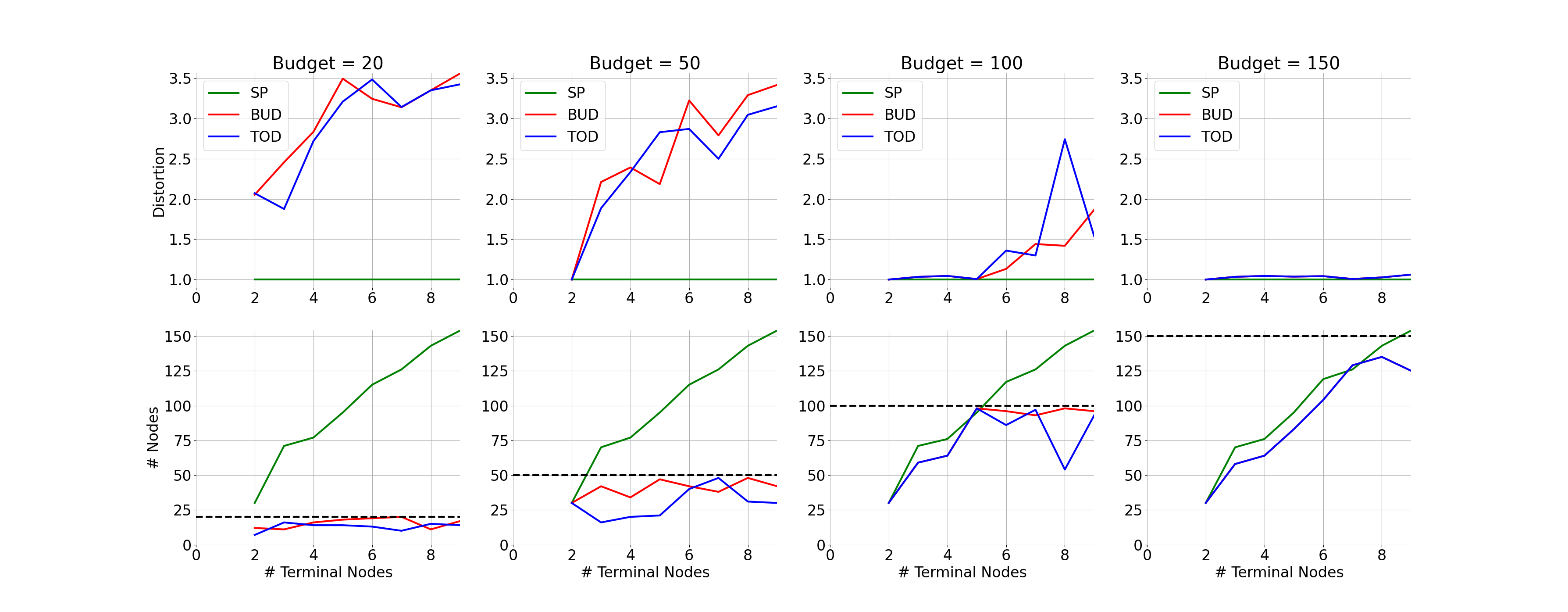}
	\caption{\revise{Comparison on distortion (top row) and number of nodes after compression (bottom row) for \namebu and \nametd
			against computing the shortest paths (SP) and pruning all nodes that are not on them.
			The dotted lines mark the communication budget.}
	}
	\label{fig:shortest_path_compare}
\end{figure*}

\subsection{Results and Discussion}

\myParagraph{Comparison with baselines}
The results on the four scenarios are documented in~\autoref{tab:ablation}.
We show the compression time (label: ``Comp''),
the nominal (label: ``Nom'', computed from the compressed \dsgshortname)
and simulated (label: ``Mis'', computed as the actual time $ \robotQuery $ takes to reach its destination in the simulator) mission times,
and the size of the compressed \dsgshortname\xspace \reviseral{(upper bounded by the budget $B$)},
all averaged across three runs.\footnote{
	The nominal mission time is computed by projecting the waypoints found by $\robotQuery$ in the compressed \dsgshortname
	onto the full \dsgshortname,
	calculating the total path length of traversing through those on the full \dsgshortname,
	and dividing by the maximum velocity of the agent. 
	In other words, it is the theoretical navigation time on the original \dsgshortname
	and measures quality of compression.
	Note that we do not directly use the compressed \dsgshortname to estimate the nominal time
	because the cross-layer edge weights would be different and likely smaller in value compared to those calculated on the full \dsgshortname,
	see~\isExtended{\cite[Appendix~B]{Chang22arxiv-dlite}}{\autoref{app:edge-weight}}.
}
The two best results for each row are in bold.

The combinatorial nature of problem~\eqref{eq:optimization-problem} makes the ILP solver impractical in robotic applications:
for the long runs,
the calculation of Optimum 
did not finish
within an hour.

In all scenarios,
robot $ \robotQuery $ reaches the goal using the compressed \dsgshortname output by \name.
The simulated mission time is at times faster on the compressed graph compared to the full \dsgshortname
because the former has fewer waypoints:
a sparser list of waypoints in a less cluttered space can actually yield faster navigation.
The different performance of
\namebu 
and \nametd 
is due to the different abstraction mechanisms,
whereby the path-wise node compression in the former yields finer granularity and usually better performance.
Discrepancies between nominal and simulated mission times are due to local navigation,
whose exploration time is difficult to estimate \textit{a posteriori} from the full \dsgshortname. 
\name always outperforms IB in terms of both nominal and simulated mission times.
Specifically,
the navigation planned on the compressed \dsgshortname produced by \namebu 
is only a minute longer than the optimal path on the original \dsgshortname.
\reviseral{IB is also unable to find a compressed \dsgshortname that preserves the necessary connectivity for the long1 case.}

\revise{
	\myParagraph{Ablation study}
	We compare distortion and number of nodes 
	on the shortest paths between terminal nodes, for increasing number of 
	terminal nodes and increasing budget constraints in~\autoref{fig:shortest_path_compare}.
	The shortest paths are optimal in terms of navigation performance (no distortion, top row),
	but easily violate the communication constraints (exceeding the budget, bottom row).
	On the other hand,
	\namebu and \nametd trade-off the path lengths between the terminal nodes to meet the budget constraint,
	and as we relax the latter,
	the distortion decreases. 
	For the case with a budget of 150 nodes (last column), \namebu and \nametd obtain the same results,
	since the initial spanner already satisfies the budget constraint.}

\isExtended{\reviseral{A study on the runtime of \name and a comparison of \namebu against the bound~\eqref{eq:bu-stretch-bound}
		are provided in~\cite[Section~V-B]{Chang22arxiv-dlite}.}}{
	Figure~\ref{fig:compress_timing} shows an ablation on the compression time  along with the number of nodes removed or added from the initial spanner for different budget sizes and with an increasing number of terminal nodes.
	We observe that the compression time is still reasonable in practice (in the order of seconds) even with up to 50 terminal nodes and compressing the \dsgshortname to less than 5\% of its original size.
	The runtime of the compression algorithms is dominated by the initial spanner construction in \texttt{build\_spanner}.
	
	Figure~\ref{fig:bounds} plots the maximal distortion against the theoretical bounds proposed in~\cref{prop:bound-budlite}.
	While the tightness of the bounds varies, it still gives a useful estimate of the worst-case distortion.
	In particular,
	we note that the bound is typically tighter for larger numbers of terminal nodes and lower budget.
}
\isExtended{}{
	\begin{figure*}
		\centering
		\includegraphics[width=0.9\linewidth, trim={0 0 0 0},clip]{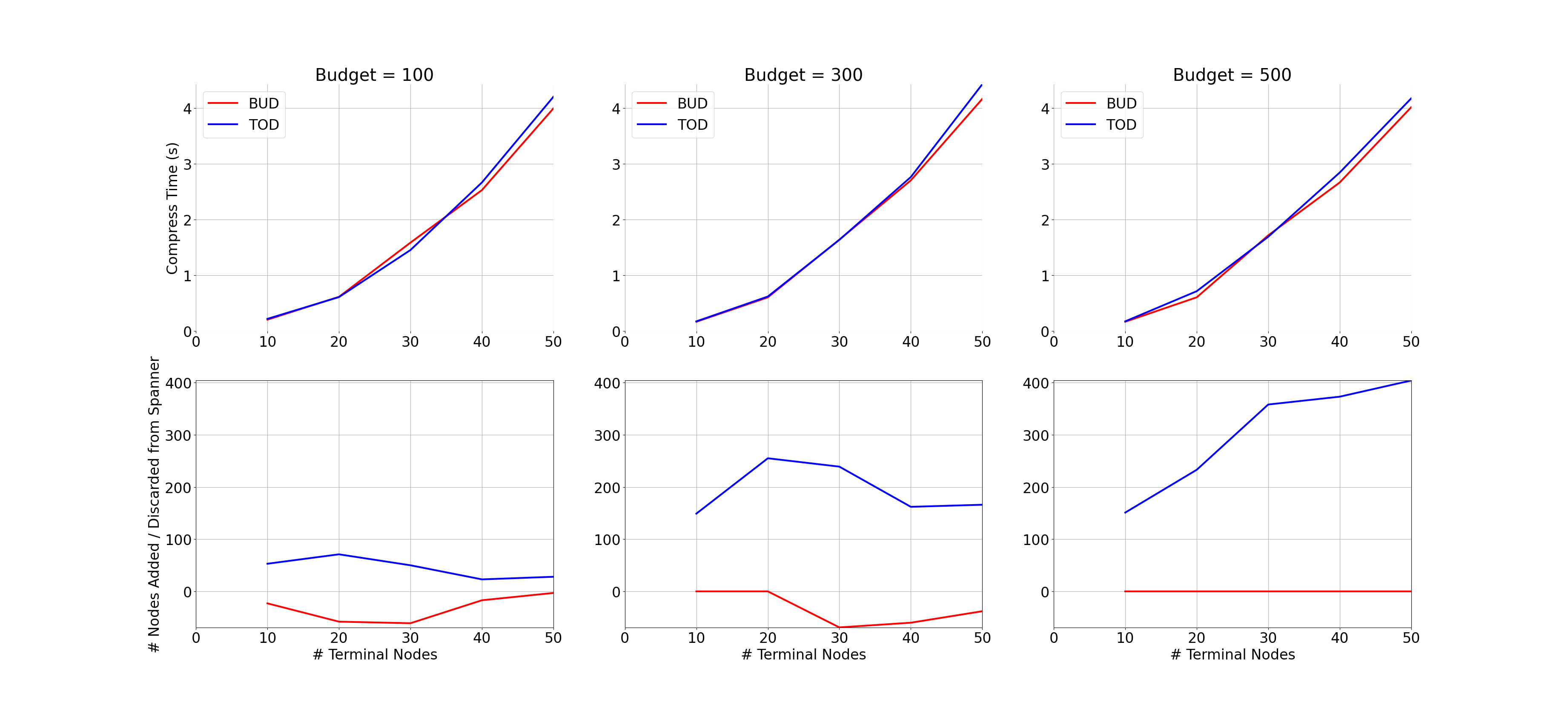}
		\caption{\small First row shows the compression time for a DSG with 3814 nodes for different number of terminals and budgets; second row shows the corresponding number of nodes removed (when abstracting nodes for BUD) or added (when expanding nodes for TOD). The overall compression time is dominated by the construction of the spanner.}
		\label{fig:compress_timing}
	\end{figure*}

	\begin{figure*}
		\centering
		\includegraphics[width=0.9\linewidth, trim={0 0 0 0},clip]{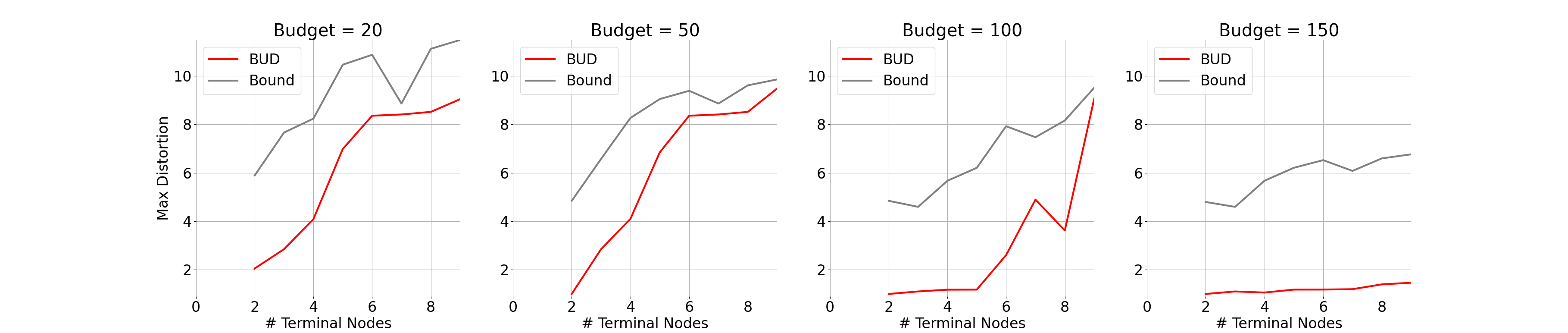}
		\caption{\small We show the distortion implied by the bound in Proposition 1 compared with the maximum distortion from the result of BUD-Lite for varying budgets and number of terminals using the DSG of the Office environment with 3814 nodes. }
		\label{fig:bounds}
	\end{figure*}
}

\section{Conclusions}
\label{sec:conclusions}

\isExtended{
	Motivated by collaborative multi-robot exploration,
	we proposed a method to compress \dsgnames under communication constraints.
	Our algorithms can accommodate a sharp node budget
	while retaining navigation performance.
	Realistic simulations validate the effectiveness of our approach.
}{
	Motivated by the goal of enabling efficient information sharing for robots that collaboratively explore an unknown environment,
	we have proposed algorithms to suitably compress \dsgnames built and transmitted by robots during exploration,
	for the case when resource constraints of a shared communication channel make lossless transmission infeasible.
	Our algorithms can accommodate the presence of a sharp budget on the size of the transmitted map,
	run in real time,
	and perform graph compression with attention to the performance on specified navigation tasks.
	Simulated experiments carried out with a realistic simulator
	show that our approach is able to meet communication constraints
	while providing satisfactory performance of navigation tasks
	planned on the compressed~\dsgshortname.
	
	The proposed approach opens several interesting avenues for future work.
	In fact,
	\dsgnames are recently developed tools,
	and their use in multi-robot cooperation and collaboration is still relatively unexplored.
	For example,
	it is interesting to adapt our compression algorithms to  data collection in dynamic environments ---as the ones considered in~\cite{Rosinol21ijrr-Kimera}--- that induce time-varying graphs.
	Also,
	extension of compression techniques to various and more general tasks should be addressed.
	Finally,
	validation of the proposed algorithms on real robots is desired to test their impact in the real world.
}
	
	{\footnotesize
		\isExtended{
			\input{main_shorts.bbl}
		}{


\begin{thebibliography}{10}
\providecommand{\url}[1]{#1}
\csname url@samestyle\endcsname
\providecommand{\newblock}{\relax}
\providecommand{\bibinfo}[2]{#2}
\providecommand{\BIBentrySTDinterwordspacing}{\spaceskip=0pt\relax}
\providecommand{\BIBentryALTinterwordstretchfactor}{4}
\providecommand{\BIBentryALTinterwordspacing}{\spaceskip=\fontdimen2\font plus
\BIBentryALTinterwordstretchfactor\fontdimen3\font minus
  \fontdimen4\font\relax}
\providecommand{\BIBforeignlanguage}[2]{{%
\expandafter\ifx\csname l@#1\endcsname\relax
\typeout{** WARNING: IEEEtran.bst: No hyphenation pattern has been}%
\typeout{** loaded for the language `#1'. Using the pattern for}%
\typeout{** the default language instead.}%
\else
\language=\csname l@#1\endcsname
\fi
#2}}
\providecommand{\BIBdecl}{\relax}
\BIBdecl

\bibitem{Rosinol20rss-dynamicSceneGraphs}
\BIBentryALTinterwordspacing
A.~Rosinol, A.~Gupta, M.~Abate, J.~Shi, and L.~Carlone, ``{3D} dynamic scene
  graphs: Actionable spatial perception with places, objects, and humans,'' in
  \emph{Robotics: Science and Systems (RSS)}, 2020,
  \linkToPdf{https://arxiv.org/pdf/2002.06289.pdf},
  \linkToMedia{http://news.mit.edu/2020/robots-spatial-perception-0715},
  \linkToVideo{https://www.youtube.com/watch?v=SWbofjhyPzI&feature=youtu.be}.
  [Online]. Available:
  \url{http://news.mit.edu/2020/robots-spatial-perception-0715}
\BIBentrySTDinterwordspacing

\bibitem{Hughes22rss-hydra}
N.~Hughes, Y.~Chang, and L.~Carlone, ``{Hydra:} a real-time spatial perception
  engine for {3D} scene graph construction and optimization,'' in
  \emph{Robotics: Science and Systems (RSS)}, 2022,
  \linkToPdf{https://arxiv.org/pdf/2201.13360.pdf}.

\bibitem{Armeni19iccv-3DsceneGraphs}
I.~Armeni, Z.~He, J.~Gwak, A.~Zamir, M.~Fischer, J.~Malik, and S.~Savarese,
  ``{3D} scene graph: A structure for unified semantics, {3D} space, and
  camera,'' in \emph{Intl. Conf. on Computer Vision (ICCV)}, 2019, pp.
  5664--5673.

\bibitem{Wu21cvpr-SceneGraphFusion}
S.~Wu, J.~Wald, K.~Tateno, N.~Navab, and F.~Tombari, ``{SceneGraphFusion}:
  Incremental {3D} scene graph prediction from {RGB-D} sequences,'' in
  \emph{IEEE Conf. on Computer Vision and Pattern Recognition (CVPR)}, 2021.

\bibitem{Kim19tc-3DsceneGraphs}
U.~Kim, J.~Park, T.~Song, and J.~Kim, ``{3-D} scene graph: A sparse and
  semantic representation of physical environments for intelligent agents,''
  \emph{IEEE Trans. Cybern.}, vol.~PP, pp. 1--13, Aug. 2019.

\bibitem{Talak21neurips-neuralTree}
R.~Talak, S.~Hu, L.~Peng, and L.~Carlone, ``Neural trees for learning on
  graphs,'' in \emph{Conf. on Neural Information Processing Systems (NeurIPS)},
  2021, \linkToPdf{https://arxiv.org/pdf/2105.07264.pdf}.

\bibitem{Cieslewski18icra}
T.~Cieslewski, S.~Choudhary, and D.~Scaramuzza, ``Data-efficient decentralized
  visual {SLAM},'' \emph{IEEE Intl. Conf. on Robotics and Automation (ICRA)},
  2018.

\bibitem{Tian18rss}
Y.~Tian, K.~Khosoussi, M.~Giamou, J.~P. How, and J.~Kelly, ``Near-optimal
  budgeted data exchange for distributed loop closure detection,'' in
  \emph{Robotics: Science and Systems (RSS)}, 2018.

\bibitem{Chang21icra-KimeraMulti}
Y.~Chang, Y.~Tian, J.~How, and L.~Carlone, ``{Kimera-Multi}: a system for
  distributed multi-robot metric-semantic simultaneous localization and
  mapping,'' in \emph{IEEE Intl. Conf. on Robotics and Automation (ICRA)},
  2021, arXiv preprint arXiv: 2011.04087,
  \linkToPdf{https://arxiv.org/pdf/2011.04087.pdf}.

\bibitem{Dennison22ral-loopPrioritization}
C.~Denniston, Y.~Chang, A.~Reinke, K.~Ebadi, G.~Sukhatme, L.~Carlone,
  B.~Morrell, and A.~Agha-mohammadi, ``Loop closure prioritization for
  efficient and scalable multi-robot {SLAM},'' \emph{{IEEE} Robotics and
  Automation Letters ({RA-L})}, 2022,
  \linkToPdf{https://arxiv.org/pdf/2205.12402.pdf}.

\bibitem{Choudhary17ijrr-distributedPGO3D}
S.~Choudhary, L.~Carlone, C.~Nieto, J.~Rogers, H.~Christensen, and F.~Dellaert,
  ``Distributed mapping with privacy and communication constraints: Lightweight
  algorithms and object-based models,'' \emph{Intl. J. of Robotics Research},
  2017, arxiv preprint: 1702.03435, \linkToPdf{http://arxiv.org/abs/1702.03435}
  \linkToWeb{https://www.economist.com/science-and-technology/2017/12/14/military-robots-are-getting-smaller-and-more-capable}
  \linkToCode{https://cognitiverobotics.github.io/distributed-mapper/}
  \linkToCode{https://github.com/uzh-rpg/dslam_open}
  \linkToCode{https://bitbucket.org/itzsid/object_slam/src/demo/cpp/ros/runDistributedORBSLAM_ROS_Robot.cpp}
  \linkToVideo{https://youtu.be/nXJamypPvVY}
  \linkToVideo{https://youtu.be/nYm2sSHuGjo}
  \linkToVideo{https://youtu.be/urZiIJK2IYk}
  \linkToVideo{https://youtu.be/-F6JpVmOrc0}.

\bibitem{Becker17esa-vehicleRouting}
A.~Becker, P.~N. Klein, and D.~Saulpic, ``A {{Quasi-Polynomial-Time
  Approximation Scheme}} for {{Vehicle Routing}} on {{Planar}} and
  {{Bounded-Genus Graphs}},'' in \emph{Proc. of the {{Annual European Symp.}}
  on {{Algorithms}} ({{ESA}})}, vol.~87, 2017, pp. 12:1--12:15.

\bibitem{Dobson14ijrr-roadmapSpanners}
A.~Dobson and K.~E. Bekris, ``Sparse roadmap spanners for asymptotically
  near-optimal motion planning,'' \emph{Intl. J. of Robotics Research},
  vol.~33, no.~1, pp. 18--47, Jan. 2014.

\bibitem{Gilbertkdd-compressingGraphs}
A.~C. Gilbert, T.~{Labs-Research}, P.~Avenue, F.~Park, and K.~Levchenko,
  ``Compressing {{Network Graphs}},'' \emph{Proc. of the LinkKDD workshop at
  the ACM Conference on KDD}, vol. 124, p.~10, 2004.

\bibitem{DeQueiroz16ip-pointCloudCompression}
R.~L. {de Queiroz} and P.~A. Chou, ``Compression of {{3D Point Clouds Using}} a
  {{Region-Adaptive Hierarchical Transform}},'' \emph{IEEE Trans. on Image
  Processing}, vol.~25, no.~8, pp. 3947--3956, Aug. 2016.

\bibitem{Sun19ral-pointcloudCompression}
X.~Sun, H.~Ma, Y.~Sun, and M.~Liu, ``A novel point cloud compression algorithm
  based on clustering,'' \emph{IEEE Robotics and Automation Letters}, vol.~4,
  no.~2, pp. 2132--2139, 2019.

\bibitem{Thanou16ip-pointcloudCompression}
D.~Thanou, P.~A. Chou, and P.~Frossard, ``Graph-based compression of dynamic 3d
  point cloud sequences,'' \emph{IEEE Transactions on Image Processing},
  vol.~25, no.~4, pp. 1765--1778, 2016.

\bibitem{Raghavan03-WebGraphs}
S.~Raghavan and H.~{Garcia-Molina}, ``Representing {{Web}} graphs,'' in
  \emph{Proc. {{International Conference}} on {{Data Engineering}} ({{Cat}}.
  {{No}}.{{03CH37405}})}, Mar. 2003, pp. 405--416.

\bibitem{Chekuri15esa-connectivityPreserving}
C.~Chekuri, T.~Rukkanchanunt, and C.~Xu, ``On {{Element-Connectivity Preserving
  Graph Simplification}},'' in \emph{Algorithms - {{ESA}} 2015}, {Berlin,
  Heidelberg}, 2015, pp. 313--324.

\bibitem{Harish12gpuc-spanningTrees}
P.~Harish, P.~J. Narayanan, V.~Vineet, and S.~Patidar, ``Chapter 7 - {{Fast
  Minimum Spanning Tree Computation}},'' in \emph{{{GPU Computing Gems Jade
  Edition}}}, ser. Applications of {{GPU Computing Series}}, Jan. 2012, pp.
  77--88.

\bibitem{Mehlhorn88ipl-steinerTreeFast}
K.~Mehlhorn, ``A faster approximation algorithm for the {{Steiner}} problem in
  graphs,'' \emph{Information Processing Letters}, vol.~27, no.~3, pp.
  125--128, Mar. 1988.

\bibitem{Ahmed20csr-GraphSpannersReview}
R.~Ahmed, G.~Bodwin, F.~D. Sahneh, K.~Hamm, M.~J.~L. Jebelli, S.~Kobourov, and
  R.~Spence, ``Graph spanners: {{A}} tutorial review,'' \emph{Computer Science
  Review}, vol.~37, p. 100253, Aug. 2020.

\bibitem{Kobayashi20stacs-fptAlgSpanner}
Y.~Kobayashi, ``An {{FPT Algorithm}} for {{Minimum Additive Spanner
  Problem}},'' in \emph{{{International Symposium}} on {{Theoretical Aspects}}
  of {{Computer Science}} ({{STACS}})}, vol. 154, 2020, pp. 11:1--11:16.

\bibitem{Elkin22dc-WeightedAdditiveSpanner}
M.~Elkin, Y.~Gitlitz, and O.~Neiman, ``Improved weighted additive spanners,''
  \emph{Distrib. Comput.}, Aug. 2022.

\bibitem{Bodwin19soda-distancePreservers}
G.~Bodwin, ``On the {{Structure}} of {{Unique Shortest Paths}} in {{Graphs}},''
  in \emph{Proc. of the {{Annual ACM-SIAM Symp.}} on {{Discrete Algorithms}}
  ({{SODA}})}, Jan. 2019, pp. 2071--2089.

\bibitem{Elkin18tang-emulators}
M.~Elkin and O.~Neiman, ``Efficient {{Algorithms}} for {{Constructing Very
  Sparse Spanners}} and {{Emulators}},'' \emph{ACM Trans. Algorithms}, vol.~15,
  no.~1, pp. 4:1--4:29, Nov. 2018.

\bibitem{Kang11icdm-GraphCompressionMining}
U.~Kang and C.~Faloutsos, ``Beyond '{{Caveman Communities}}': {{Hubs}} and
  {{Spokes}} for {{Graph Compression}} and {{Mining}},'' in \emph{Proc. of the
  {{IEEE}} {{International Conference}} on {{Data Mining}}}, Dec. 2011, pp.
  300--309.

\bibitem{Besta19arXiv-surveyGraphCompression}
M.~Besta and T.~Hoefler, ``Survey and {{Taxonomy}} of {{Lossless Graph
  Compression}} and {{Space-Efficient Graph Representations}},''
  \emph{arXiv:1806.01799 [cs, math]}, Apr. 2019.

\bibitem{Borici14aina-semanticGraphCompression}
A.~Borici and A.~Thomo, ``Semantic {{Graph Compression}} with
  {{Hypergraphs}},'' in \emph{{{IEEE}} {{International Conference}} on
  {{Advanced Information Networking}} and {{Applications}}}, {Victoria, BC,
  Canada}, May 2014, pp. 1097--1104.

\bibitem{Young21commPhys-hypergraphReconstruction}
J.-G. Young, G.~Petri, and T.~P. Peixoto, ``Hypergraph reconstruction from
  network data,'' \emph{Commun. Phys.}, vol.~4, no.~1, p. 135, Dec. 2021.

\bibitem{Karypis99desAutConf-multilevelHypergraphPartitioning}
G.~Karypis and V.~Kumar, ``Multilevel k-{{Way Hypergraph Partitioning}},'' in
  \emph{Design {{Automation Conference}}}, Jun. 1999, pp. 343--348.

\bibitem{Devine06pdps-hypergraphPartitioning}
K.~Devine, E.~Boman, R.~Heaphy, R.~Bisseling, and U.~Catalyurek, ``Parallel
  hypergraph partitioning for scientific computing,'' in \emph{Proc. of the
  {{IEEE International Parallel}} \& {{Distributed Processing Symposium}}},
  2006, p. 10 pp.

\bibitem{Zhang20iotj-hypergraphSignalProcessing}
S.~Zhang, Z.~Ding, and S.~Cui, ``Introducing {{Hypergraph Signal Processing}}:
  {{Theoretical Foundation}} and {{Practical Applications}},'' \emph{IEEE
  Internet Things J.}, vol.~7, no.~1, pp. 639--660, Jan. 2020.

\bibitem{Silver21Number:13-gnnImportanceLearning}
T.~Silver, R.~Chitnis, A.~Curtis, J.~B. Tenenbaum, T.~{Lozano-P{\'e}rez}, and
  L.~P. Kaelbling, ``Planning with {{Learned Object Importance}} in {{Large
  Problem Instances}} using {{Graph Neural Networks}},'' \emph{Proc. of the
  AAAI Conference on Artificial Intelligence}, vol.~35, no.~13, pp.
  11\,962--11\,971, May 2021.

\bibitem{Agia22corl-Taskography}
C.~Agia, K.~M. Jatavallabhula, M.~Khodeir, O.~Miksik, V.~Vineet, M.~Mukadam,
  L.~Paull, and F.~Shkurti, ``Taskography: {{Evaluating}} robot task planning
  over large {{3D}} scene graphs,'' in \emph{Conference on Robot Learning
  (CoRL)}.\hskip 1em plus 0.5em minus 0.4em\relax {PMLR}, Jan. 2022, pp.
  46--58.

\bibitem{Tian21ijrr-resourceAwareLoopClosure}
Y.~Tian, K.~Khosoussi, and J.~P. How, ``A resource-aware approach to
  collaborative loop-closure detection with provable performance guarantees,''
  \emph{Intl. J. of Robotics Research}, vol.~40, no. 10-11, pp. 1212--1233,
  Sep. 2021.

\bibitem{Larsson20tro-qTree}
D.~T. Larsson, D.~Maity, and P.~Tsiotras, ``Q-{{Tree Search}}: {{An
  Information-Theoretic Approach Toward Hierarchical Abstractions}} for
  {{Agents With Computational Limitations}},'' \emph{{IEEE} Trans. Robotics},
  vol.~36, no.~6, pp. 1669--1685, Dec. 2020.

\bibitem{Larsson21caadcps-informationTheoreticAbstractions}
------, ``Information-theoretic abstractions for resource-constrained agents
  via mixed-integer linear programming,'' in \emph{Proc. of the {{Workshop}} on
  {{Computation-Aware Algorithmic Design}} for {{Cyber-Physical Systems}}}, May
  2021, pp. 1--6.

\bibitem{Larsson21ral-infoTheoreticAbstractionsPlanning}
------, ``Information-{{Theoretic Abstractions}} for {{Planning}} in {{Agents
  With Computational Constraints}},'' \emph{{IEEE} Robotics and Automation
  Letters}, vol.~6, no.~4, pp. 7651--7658, Oct. 2021.

\bibitem{Klein06tc-subsetSpanner}
P.~N. Klein, ``A subset spanner for {{Planar}} graphs, with application to
  subset {{TSP}},'' in \emph{Proceedings of the Annual {{ACM}} Symposium on
  {{Theory}} of {{Computing}}}, ser. {{STOC}} '06, May 2006, pp. 749--756.

\bibitem{Althofer93dcg-sparseSpanners}
I.~Alth{\"o}fer, G.~Das, D.~Dobkin, D.~Joseph, and J.~Soares, ``On sparse
  spanners of weighted graphs,'' \emph{Discrete Comput Geom}, vol.~9, no.~1,
  pp. 81--100, Jan. 1993.

\bibitem{Kavitha15stacs-pairwiseSpanners}
T.~Kavitha, ``New {{Pairwise Spanners}},'' in \emph{{{International Symposium}}
  on {{Theoretical Aspects}} of {{Computer Science}} ({{STACS}} 2015)},
  vol.~30, 2015, pp. 513--526.

\bibitem{Baswana10ta-spanners}
S.~Baswana, T.~Kavitha, K.~Mehlhorn, and S.~Pettie, ``Additive spanners and
  (a,b)-spanners,'' \emph{ACM Trans. Algorithms}, vol.~7, no.~1, pp. 5:1--5:26,
  Dec. 2010.

\bibitem{Abboud17acmj-additiveSpannerBound}
A.~Abboud and G.~Bodwin, ``The 4/3 {{Additive Spanner Exponent Is Tight}},''
  \emph{J. ACM}, vol.~64, no.~4, pp. 28:1--28:20, Sep. 2017.

\bibitem{Cygan13stacs-pairwiseSpanners}
M.~Cygan, F.~Grandoni, and T.~Kavitha, ``On {{Pairwise Spanners}},'' in
  \emph{{{International Symposium}} on {{Theoretical Aspects}} of {{Computer
  Science}} ({{STACS}})}, vol.~20, {Dagstuhl, Germany}, 2013, pp. 209--220.

\bibitem{Elkin22swat-almostShortestPath}
M.~Elkin, Y.~Gitlitz, and O.~Neiman, ``Almost {{Shortest Paths}} with
  {{Near-Additive Error}} in {{Weighted Graphs}},'' in \emph{Proc. of the
  {{Scandinavian Symposium}} and {{Workshops}} on {{Algorithm Theory}}
  ({{SWAT}})}, vol. 227, 2022, pp. 23:1--23:22.

\bibitem{Ahmed20gctcs-weightedAdditiveSpanners}
R.~Ahmed, G.~Bodwin, F.~D. Sahneh, S.~Kobourov, and R.~Spence, ``Weighted
  {{Additive Spanners}},'' in \emph{Graph-{{Theoretic Concepts}} in {{Computer
  Science}}}, ser. Lecture {{Notes}} in {{Computer Science}}, 2020, pp.
  401--413.

\bibitem{Ahmed21sea-multiLevelSpanners}
R.~Ahmed, G.~Bodwin, F.~D. Sahneh, K.~Hamm, S.~Kobourov, and R.~Spence,
  ``Multi-{{Level Weighted Additive Spanners}},'' in \emph{Proc. of the
  {{International Symposium}} on {{Experimental Algorithms}} ({{SEA}})}, vol.
  190, 2021, pp. 16:1--16:23.

\bibitem{Censor-Hillel18dc-distributedAdditiveSpanners}
K.~{Censor-Hillel}, T.~Kavitha, A.~Paz, and A.~Yehudayoff, ``Distributed
  construction of purely additive spanners,'' \emph{Distrib. Comput.}, vol.~31,
  no.~3, pp. 223--240, Jun. 2018.

\bibitem{Baswana08da-spannerDynamicAlg}
S.~Baswana and S.~Sarkar, ``Fully dynamic algorithm for graph spanners with
  poly-logarithmic update time,'' in \emph{Proc. of the {{Annual ACM-SIAM
  Symp.}} on {{Discrete Algorithms}} ({{SODA}})}, Jan. 2008, pp. 1125--1134.

\bibitem{Arya95tc-EuclideanSpanners}
S.~Arya, G.~Dast, D.~M. Mount, J.~S. Salowe, and M.~Smid, ``Euclidean spanners:
  Short, thin, and lanky,'' in \emph{Proc. of the {{Acm Symposium}} on
  {{Theory}} of {{Computing}}}, 1995, pp. 489--498.

\bibitem{Abboud18soda-ReachabilityPreservers}
A.~Abboud and G.~Bodwin, ``Reachability {{Preservers}}: {{New Extremal Bounds}}
  and {{Approximation Algorithms}},'' in \emph{Proc. of the {{Annual ACM-SIAM
  Symp.}} on {{Discrete Algorithms}} ({{SODA}})}, Jan. 2018, pp. 1865--1883.

\bibitem{Tishby01accc-IB}
N.~Tishby, F.~Pereira, and W.~Bialek, ``The information bottleneck method,''
  \emph{Proc. of the Allerton Conference on Communication, Control and
  Computation}, vol.~49, 07 2001.

\bibitem{Rosinol21ijrr-Kimera}
A.~Rosinol, A.~Violette, M.~Abate, N.~Hughes, Y.~Chang, J.~Shi, A.~Gupta, and
  L.~Carlone, ``Kimera: from {SLAM} to spatial perception with {3D} dynamic
  scene graphs,'' \emph{Intl. J. of Robotics Research}, vol.~40, no. 12--14,
  pp. 1510--1546, 2021, arXiv preprint arXiv: 2101.06894,
  \linkToPdf{https://arxiv.org/pdf/2101.06894.pdf}.

\end{thebibliography}
		}	
	}
	
	\isExtended{}{
		\newpage
		\onecolumn
		\appendices

\section{Exact Budget-Constrained Spanner}
\label{app:ilp}

Problem~\eqref{eq:optimization-problem} can be solved exactly by the following ILP
(adapted from the exact spanner formulation in~\cite[Section 4]{Ahmed21sea-multiLevelSpanners}),
\begin{mini!}
	{\substack{x_i \,\forall i\in\nodes[\dsg]\\
		x_{i,j}^{st}\,\forall (i,j)\in\edges[\dsg],\forall(s,t)\in\pairs}}{\distortion}
	{\label{eq:ILP}}
	{\label{eq:ILP-objective-cost}}
	\addConstraint{\sum_{(i,j) \in \overline{\mathcal{\edges}}_\dsg} x_{(i,j)}^{u v} \weight{\dsg}{i}{j}}
		{\le \dist{\dsg}{s}{t} + \distortion \maxweight{\dsg}{s}{t}}{\quad \forall (s,t)\in\pairs, \forall (i,j)\in\edges[\dsg]}\label{eq:ILP-constr-distortion}
	\addConstraint{\sum_{(i,j) \in \Out{i}} x_{(i,j)}^{st}-\sum_{(j,i) \in\In{i}} x_{(j,i)}^{st}}
		{= \begin{cases}
				1 & i=s \\ 
				-1 & i=t \\ 
				0 & \text{else}
			\end{cases}}{\quad \forall (s,t)\in\pairs, \forall i \in \nodes[\dsg]}\label{eq:ILP-constr-path-1}
	\addConstraint{\sum_{(i,j) \in \Out{i}} x_{(i,j)}^{st}}{ \leq 1}{\quad \forall (s,t)\in\pairs, \forall i \in \nodes[\dsg]}\label{eq:ILP-constr-path-2}
	\addConstraint{x_i}{\ge x_{(i,j)}^{st} + x_{(j,i)}^{st}}{\quad\forall (s,t)\in\pairs, \forall i \in\nodes[\dsg],\forall (i,j)\in\edges[\dsg]}\label{eq:ILP-constr-pick-edges}
	\addConstraint{\sum_{i\in\nodes[\dsg]} x_i}{\le\bw}{}\label{eq:ILP-constr-bw}
	\addConstraint{x_i,x_{(i,j)}^{st}}{\in\{0,1\}}{\quad\forall(s,t)\in\pairs, \forall i \in\nodes[\dsg], \forall (i,j)\in\edges[\dsg],}\label{eq:ILP-constr-integer}
\end{mini!}
where
$ x_i $ is associated with each node $ i\in\nodes[\dsg] $ and is $ 1 $ if it is included in the spanner,
$ x_{(i,j)}^{st} $ is an edge variable equal to $ 1 $ if and only if edge $ (i,j) $ is taken as part of the path between $ s $ and $ t $,
$ \overline{\mathcal{\edges}}_\dsg $ is the augmented set of bidirected edges,
obtained by adding edge $ (j,i) $ for each edge $ (i,j) \in\edges[\dsg]$,
\eqref{eq:ILP-constr-distortion} forces maximum distortion for all paths between terminal nodes,
\eqref{eq:ILP-constr-path-1}--\eqref{eq:ILP-constr-path-2} ensure that the chosen edges form a path for each pair of terminals $ (\source,\target) $,
\eqref{eq:ILP-constr-pick-edges} ensures that a node $ i $ is taken if any edges incident to it are taken,
and~\eqref{eq:ILP-constr-bw} encodes the limited budget on the number of selected nodes.

\section{\titlecap{calculation of edge weights}}
\label{app:edge-weight}
The edge weights associated with the intra-layer edges of the \dsgname $ \dsg $ are simply the Euclidean distance between the two nodes the edge connects. 
For example, for the layer consisting of places, the weight associated to an edge would be the Euclidean distance between the two connected places; 
for the layer consisting of rooms, the edge weight would be the Euclidean distance between the centroid of two rooms.
The calculation of inter-layer edge weights is more nuanced: they cannot be simply Euclidean distances,
since that would fail to capture the actual effort to traverse,
for example,
from a room centroid to a place in the room 
without precise knowledge of free-space locations in the place layer.
Intuitively,
the lack of such precise spatial information requires the robot to parse the room by locally exploring it,
until the target location (place) is reached.
In general,
if the robot is aware only of an abstract,
spatially coarse representation of granular geometric information about free-space locations
(such as a room or building node that represents a set of places nodes),
time-consuming exploration is needed to supply the missing spatial information.
This fact well summarizes the trade-off that a robot faces when compressing its local map:
high compression rate enables quicker transmission,
but inevitably causes the robot that receives the compressed map
to perform suboptimal navigation.

Hence, for inter-layer edges, we devise a method to associate to each edge a weight that is at least as large as the shortest path of traversal. 
In particular,
for each inter-layer edge, we have a node in the higher layer,
denoted by $\mathbf{x}$,
and a node in the lower layer,
denoted by $\mathbf{y}$.
To find the weight of the inter-layer edge $ e_{\mathbf{x},\mathbf{y}} $ connecting $ \mathbf{x} $ and $ \mathbf{y} $,
we first find a node $\mathbf{y}_0$ in the lower layer that has the smallest Euclidean norm to the centroid of the set of all the nodes that are children of $\mathbf{x}$:
then,
the weight of $ e_{\mathbf{x},\mathbf{y}} $ is computed as 
\begin{equation}
	\weight{\dsg}{\mathbf{x}}{\mathbf{y}} \doteq \lVert\mathbf{x} - \mathbf{y}_0\rVert + \dist{\dsg}{\mathbf{y}_0}{\mathbf{y}}, 
\end{equation}
where $ \lVert\cdot\rVert $ denotes the Euclidean norm.
Observe that the weight is greater than or equal to the path length of the shortest path between $\mathbf{y}_0$ and $\mathbf{y}$. 
Intuitively, the weight of a room-to-place inter-layer edge is the distance between the room centroid and the closest place, 
plus the path length from the closest place to the target place.

Importantly,
the above heuristic is consistent with navigation performance:
the cost (\ie estimated navigation time) of traversing the inter-layer edge between two nodes
(\eg to navigate from a place node to the room centroid) is higher than
following the shortest path on the place layer that connects the two corresponding place nodes
(\eg the source place node and the closest place node to the room centroid).
This feature is crucial in order for the proposed compression algorithms to work properly,
because they will first favor intra-layer edges (which retain navigation performance)
and use inter-layer edges only when necessary (causing an increase in navigation time).

However,
we also note that calculating the actual impact of local navigation is difficult having only the final \dsgshortname,
so that the actual navigation time may differ from the estimated one.
Nonetheless,
we argue that such inter-layer edge weights could be reliably estimated by a robot navigating the environment.
Indeed,
rather than calculating the weights \textit{a posteriori} based on the final \dsgshortname,
the robot can compute them \textit{online} (while building the \dsgshortname) based on its own exploration time:
this directly relates inter-layer edges between places,
rooms,
and buildings nodes
with the expected navigation effort.

\section{Proof of~\cref{prop:bound-budlite}}\label{app:bound}

\begin{figure}
	\centering
	\includegraphics[width=.4\linewidth]{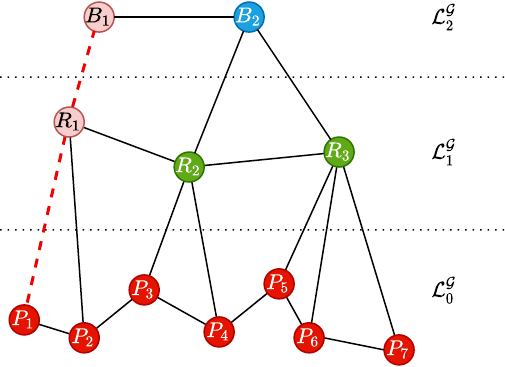}
	\caption{Partition of a toy \dsgname into layers.
		The ancestors of place node $P_1$ are colored in light red and connected through dashed red edges.
		For example,
		room node $R_1$ is the $1$st ancestor of $P_1$,
		that is,
		$\ancestor{P_1}{\dsg}{1} = R_1$.
		The three rooms have diameter equal to $2$, $2$, and $3$, respectively.
		For example,
		$\diam{R_3}{\dsg} = 3$ because $P_5$ and $P_7$,
		which are the nodes farthest apart among all its children,
		are connected by the shortest path $\{P_5,P_6,P_7\}$ with cardinality $3$.
	}
	\label{fig:layers}
\end{figure}

\begin{proof}
	First,
	recall that terminals are place nodes,
	that is,
	for any $(\source,\target)\in\pairs$,
	it holds $\source,\target\in\layer{\dsg}{0}$.
	
	\cref{alg:bottom-up} first builds a spanner with maximum distortion parameterized by $\beta$.
	Hence,
	the finest layer (with the smallest index) retained in such initial spanner needs to be either $\layer{\dsg}{\lz} $,
	with $\lz$ defined above,
	or any layer below.
	In particular,
	retaining $\layer{\dsg}{\lz} $ as the finest layer yields the maximum tolerable distortion given $\beta$.
	This can be seen from the following inequalities valid for any path $\path{\dsgc}{\source}{\target}$
	whose nodes are in layer $\layer{\dsg}{\ell}$ or in layers above:
	\begin{equation}\label{eq:first-layer}
		\dist{\dsgc}{\source}{\target} \ge 2\sum_{i=1}^{\ell}\wcmin{i} + \mmin{\ell}\wmin{\ell}
	\end{equation}
	where the summation accounts for cross-layer edges connecting nodes $\source$ and $\target$ to their respective ancestors in layer $\layer{\dsg}{\ell}$
	and the term $\mmin{\ell}\wmin{\ell}$ accounts for the minimum distance between such two ancestors.
	Then,
	after creation of the initial spanner,
	there exists at least one pair $(\source,\target)\in\pairs$ whose shortest path $\path{\dsgc}{\source}{\target}$
	passes across layer $\layer{\dsg}{\lz}$ or a layer below.
	Because the spanner construction is near-optimal and only guarantees an upper bound on distance stretch,
	it is possible that finer layers (with index smaller than $\lz$) are retained,
	providing lower distortion for shortest paths passing across them.
	

	\begin{figure}
		\centering
		\includegraphics[width=.45\linewidth]{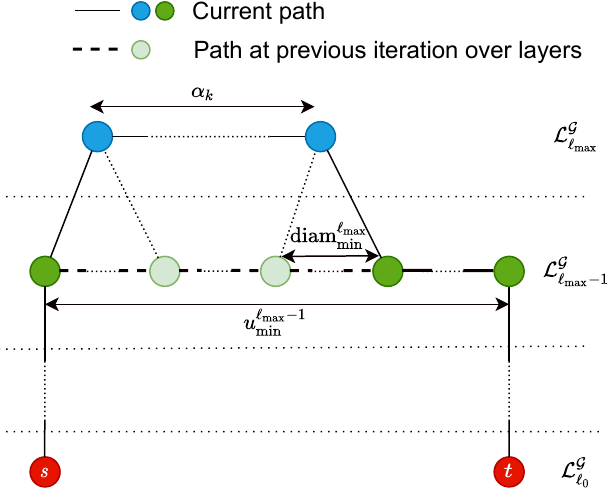}
		\caption{Representation of the maximum distortion provided by a shortest path in the compressed graph.
			Terminal pair $(\source,\target)\in\pairs$ is in the finest layer $\layer{\dsg}{\lz}$ retained by the spanner,
			which here corresponds to the place layer.
			After $k$ total iterations,
			the path has at most $\alpha_k$ nodes in the highest layer $\layer{\dsg}{\lmax}$.
			The remaining nodes in layer $\layer{\dsg}{\lmax-1}$ are at most the ones that remain
			after replacement of at least $\bmin{\lmax}$ nodes for each newly added node in layer $\layer{\dsg}{\lmax}$.
		}
	\end{figure}
	
	Consider now any number $k$ of total iterations of the innermost loop of~\cref{alg:bottom-up} at~\cref{alg1:foreach-path}
	that parses terminal pairs.
	After $\ell$ layers have been parsed in the outmost loop at~\cref{alg1:foreach-layer},
	with the last layer $\layer{\dsg}{\ell}$ having possibly been parsed only partially,
	the total number of iterations is lower bounded as follows:
	\begin{equation}\label{eq:lower-bound-iterations}
		k > |\pairs|\sum_{i=\lz}^{\ell-1}\mmin{i}
	\end{equation}
	where each pair has been parsed for at least $\sum_{i=\lz}^{\ell-1}\mmin{i}$ iterations to reach the $\ell$th layer.
	In particular,
	each pair is parsed for at least $\mmin{i}$ iterations in the innermost loop for each layer $\layer{\dsg}{i}$,
	$i=\lz,\dots,\ell-1$,
	plus other possible iterations for layer $\layer{\dsg}{\ell}$.
	Then,
	the highest layer $\layer{\dsg}{\ell}$ that can be reached after $k$ total iterations has index $\ell=\lmax$ defined above.
	Because higher layers feature coarser spatial resolution,
	the maximum distortion is given by the path with the largest distortion that passes across the highest reachable layer $\lmax$.
	This finally yields the upper bound in~\eqref{eq:bu-stretch-bound},
	as described next.
	The summation is the maximum distance given by cross-layer edges that connect terminals $\source$ and $\target$ to
	layer $\layer{\dsg}{\lmax}$.
	The maximum number of steps (nodes) of any path in such a layer is given by $\alpha_k$ defined above,
	whereby each layer $\layer{\dsg}{i}$ below $\layer{\dsg}{\lmax}$ provides the least number of steps
	between the corresponding $i$th ancestors of $\source$ and $\target$.
	Thus,
	the term $\alpha_k\wmax{\lmax}$ is the maximum length of the portion of the path in layer $\layer{\dsg}{\lmax}$.
	Finally,
	the last term is the maximum length of the remaining portion of the path,
	which passes across nodes in layer $\layer{\dsg}{\lmax-1}$ by construction.
	In particular,
	the maximum number of steps is given by $\mmin{\lmax-1}-\alpha_k\bmin{\lmax}$,
	that accounts for the remaining number of edges in layer $\layer{\dsg}{\lmax-1}$ after $\alpha_k$ iterations on the $\lmax$th layer.
\end{proof}

\section{Extra Numerical Tests and Simulations Results}
\label{app:experiments}

In this Appendix,
we visually illustrate how the proposed algorithms perform on two tested simulated environments:
an apartment scene and the office scene already used in~\autoref{sec:experiments}.

\subsection{Apartment Scene}

\begin{figure}[h]
	\centering
	\begin{minipage}{.4\textwidth}
		\centering
		\includegraphics[width=.65\columnwidth]{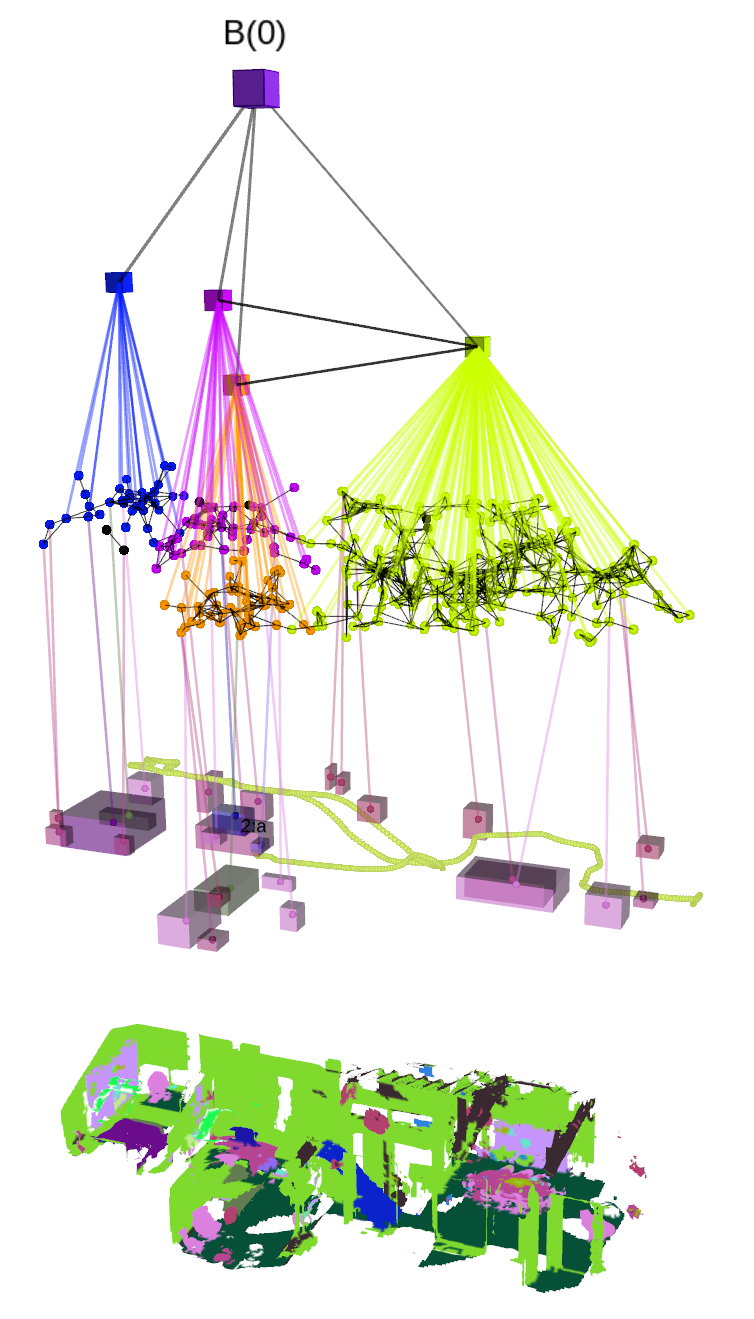}
	\end{minipage}%
	\hfil
	\begin{minipage}{.6\textwidth}
		\centering
		\includegraphics[width=.8\columnwidth]{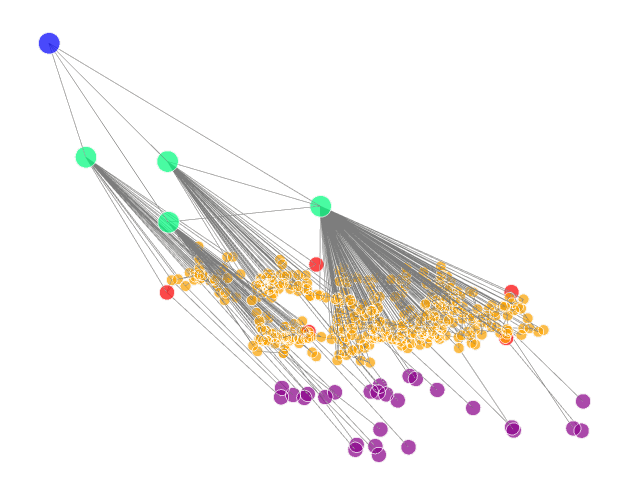}
	\end{minipage}%
	\caption{Original \dsgname of the Apartment scene.
		Left: full \dsgshortname with semantic map and meshgrid.
		Right: schematic version with objects (purple),
		places (yellow),
		rooms (green),
		and building (blue) nodes.
		Terminal places nodes are marked with red color.
		Full size: $ 453 $ nodes.
		\label{fig:apt-original}}
\end{figure}

Figure~\ref{fig:apt-original} shows the original \dsgshortname of the Apartment,
which is composed of $ 453 $ nodes connected by $ 1403 $ edges.

\begin{figure}
	\centering
	\begin{subfigure}{.333\textwidth}
		\centering
		\includegraphics[width=\columnwidth]{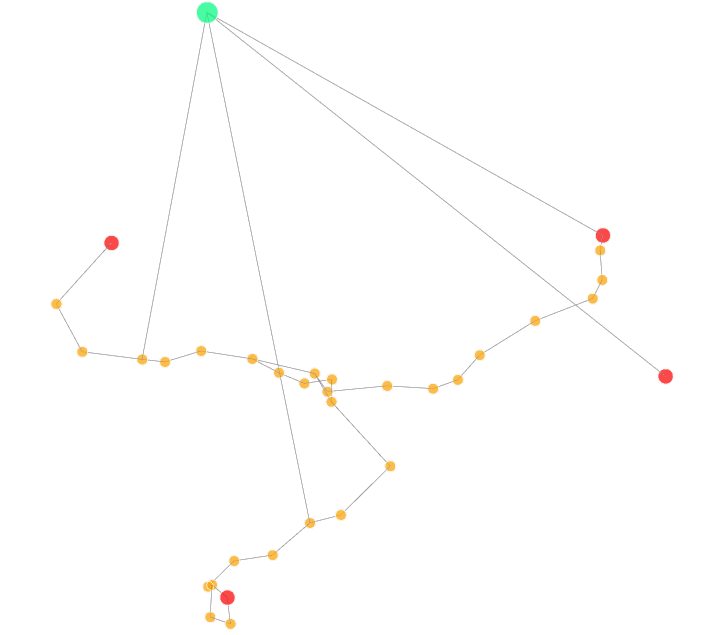}
		\caption{Budget: $ 40 $ nodes (actual size: $ 34 $).}
		\label{fig:apt-budlite-budget40}
	\end{subfigure}%
	\hfill
	\begin{subfigure}{.333\textwidth}
		\centering
		\includegraphics[width=\columnwidth]{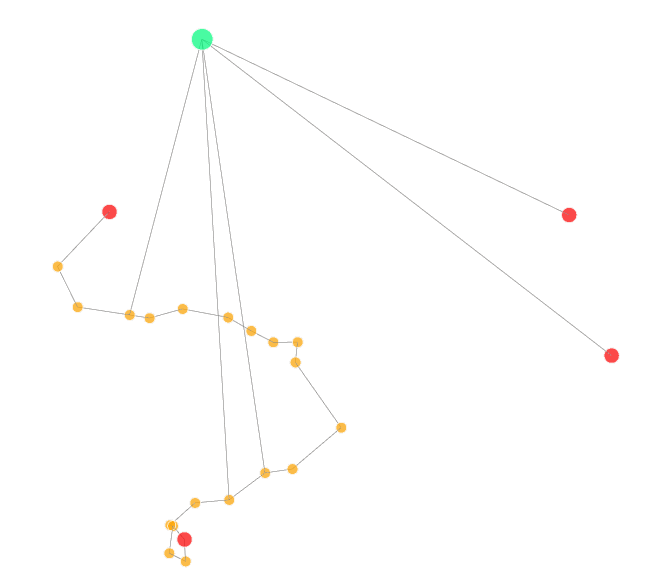}
		\caption{Budget: $ 30 $ nodes (actual size: $ 24 $).}
		\label{fig:apt-budlite-budget30}
	\end{subfigure}%
	\hfill
	\begin{subfigure}{.333\textwidth}
		\centering
		\includegraphics[width=\columnwidth]{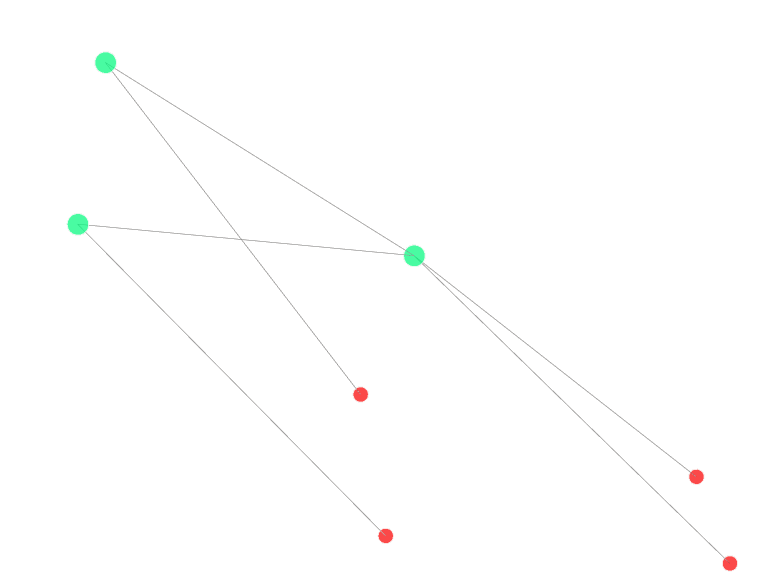}
		\caption{Budget: $ 10 $ nodes (actual size: $ 7 $).}
		\label{fig:apt-budlite-budget10}
	\end{subfigure}
	\caption{Compressed \dsgname of the Apartment output by \namebu.}
	\label{fig:apt-budlite}
\end{figure}

Figure~\ref{fig:apt-budlite} shows the compressed graphs obtained by running \namebu with different budget values.
Recall that \namebu compressed the \dsgshortname by exploiting the hierarchy bottom-up,
parsing shortest paths one after the other and abstracting away places nodes
to their corresponding room nodes.
In this case,
we consider four terminals nodes scattered across three rooms.
Note that,
as the available budget decreases
(\ie the communication constraint get tighter),
portions of shortest paths along places nodes are pruned away
and abstracted into their respective room nodes.
In particular,
\namebu uses a single room node when the budget is $ 30 $ (\autoref{fig:apt-budlite-budget40}) or larger (\autoref{fig:apt-budlite-budget30}),
sacrificing navigation performance for the two terminals nodes belonging to that room and
retaining fine-scale spatial information in proximity of the two terminals nodes belonging to other two rooms.
Conversely,
all rooms are used when the budget gets too small (\autoref{fig:apt-budlite-budget10}),
and the compression procedure is forced to remove all free-space locations in the place layer.
Importantly,
the output of \namebu also depends on the order in which terminals pairs (and hence shortest paths) are parsed,
which may cause larger or smaller amounts of nodes to be deleted before others:
improving this feature of the algorithm is an important aspect that will be considered in follow-up work.

\begin{figure}
	\centering
	\begin{subfigure}{.333\textwidth}
		\centering
		\includegraphics[width=\columnwidth]{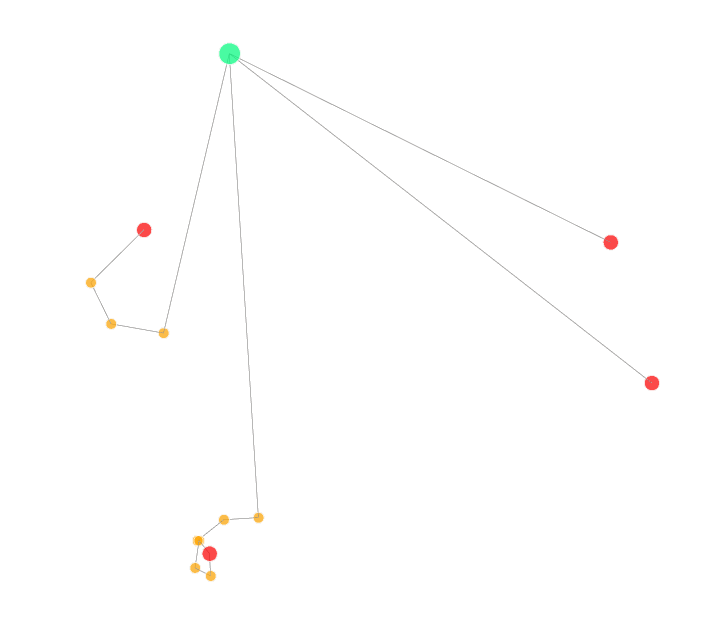}
		\caption{Budget: $ 40 $ nodes (actual size: $ 13 $).}
		\label{fig:apt-todlite-budget40}
	\end{subfigure}%
	\hfill
	\begin{subfigure}{.333\textwidth}
		\centering
		\includegraphics[width=\columnwidth]{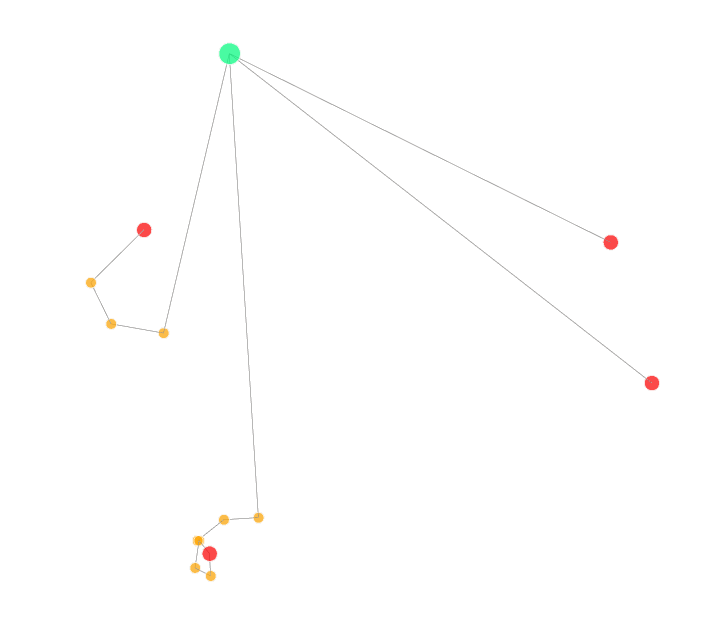}
		\caption{Budget: $ 30 $ nodes (actual size: $ 13 $).}
		\label{fig:apt-todlite-budget30}
	\end{subfigure}%
	\hfill
	\begin{subfigure}{.333\textwidth}
		\centering
		\includegraphics[width=\columnwidth]{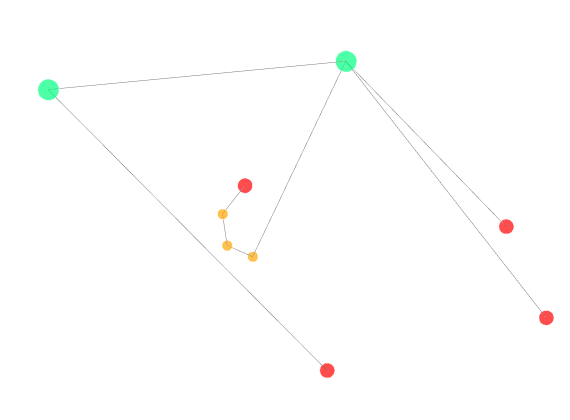}
		\caption{Budget: $ 10 $ nodes (actual size: $ 9 $).}
		\label{fig:apt-todlite-budget10}
	\end{subfigure}
	\caption{Compressed \dsgname of the Apartment output by \nametd.}
	\label{fig:apt-todlite}
\end{figure}

Conversely,
\nametd results are shown in~\autoref{fig:apt-todlite}.
Recall that this algorithm leverages the \dsgshortname hierarchy via top-down expansion of nodes,
which is clearly visible from~\autoref{fig:apt-todlite} as opposed to shortest path-wise compression on \namebu.
In particular,
it can be seen from~\cref{fig:apt-todlite-budget40,fig:apt-todlite-budget30} that
expanding one room node is not possible without exceeding the allowed budget:
hence,
\nametd expands the other two rooms nodes in both cases,
resulting in preserved places nodes close to the two terminals nodes on the left.
When budget is further reduced (\autoref{fig:apt-todlite-budget10}),
two rooms are not expanded
and fine-scale geometric information in the place layer is retained only for the room corresponding to the terminal node on the top left.

Comparing the compression results of \namebu in~\autoref{fig:apt-budlite} and of \nametd in~\autoref{fig:apt-todlite}
shows both their different mechanisms and advantages:
in this case,
large budgets favor the \namebu bottom-up compression,
which is able to retain more places nodes;
instead,
small budget favors the \nametd top-down expansion,
which eventually retains places nodes associated with one room,
whereas the order chosen to parse shortest paths in the \namebu forces to abstract away all nodes in the place layer.

\subsection{Office Scene}

\begin{figure}[h]
	\centering
	\begin{minipage}{.333\textwidth}
		\centering
		\includegraphics[width=.9\columnwidth]{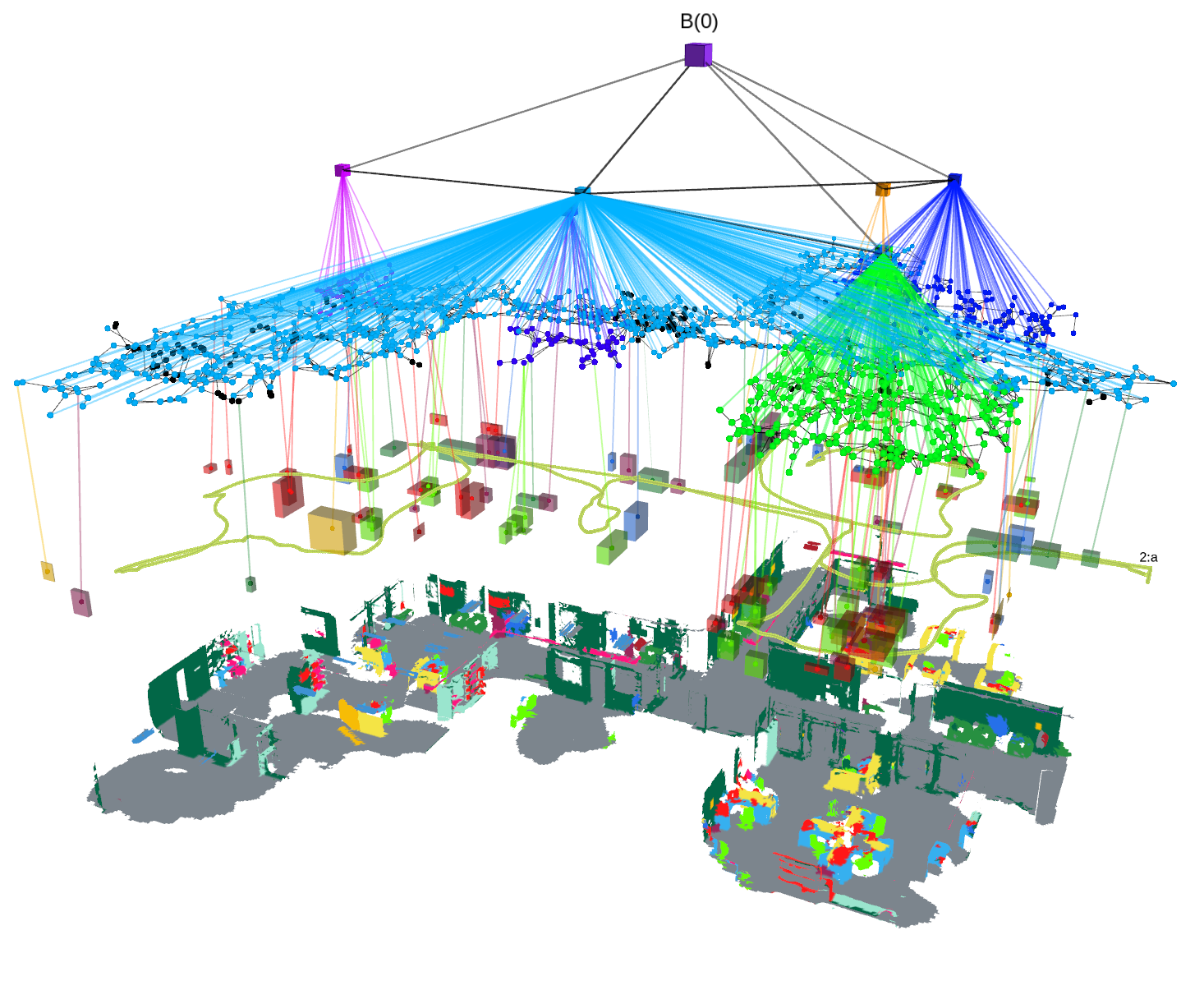}
	\end{minipage}%
	\hfill
	\begin{minipage}{.333\textwidth}
		\centering
		\includegraphics[width=\columnwidth]{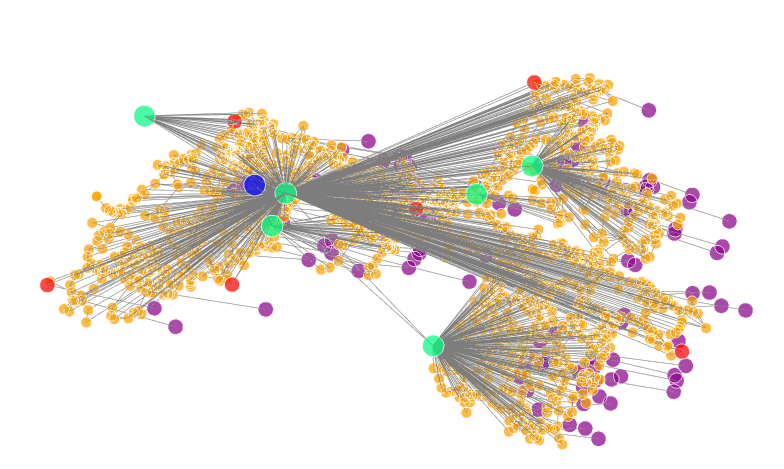}
	\end{minipage}%
	\hfill
	\begin{minipage}{.333\textwidth}
		\centering
		\includegraphics[width=\columnwidth]{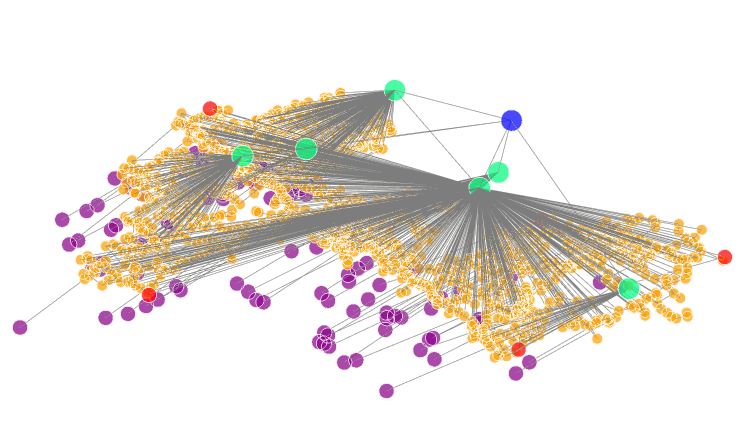}
	\end{minipage}
	\caption{Original \dsgname of the Office scene.
		Left: full \dsgshortname with semantic map and meshgrid.
		Right: schematic version.
		Full size: $ 1675 $ nodes.
		\label{fig:office-original}}
\end{figure}

Figure~\ref{fig:office-original} shows the original \dsgshortname of the Office,
which is composed of $ 1675 $ nodes connected by $ 5396 $ edges.
For this test,
we consider six terminal nodes scattered across two rooms.

\begin{figure}
	\centering
	\begin{subfigure}{.333\textwidth}
		\centering
		\includegraphics[width=\columnwidth]{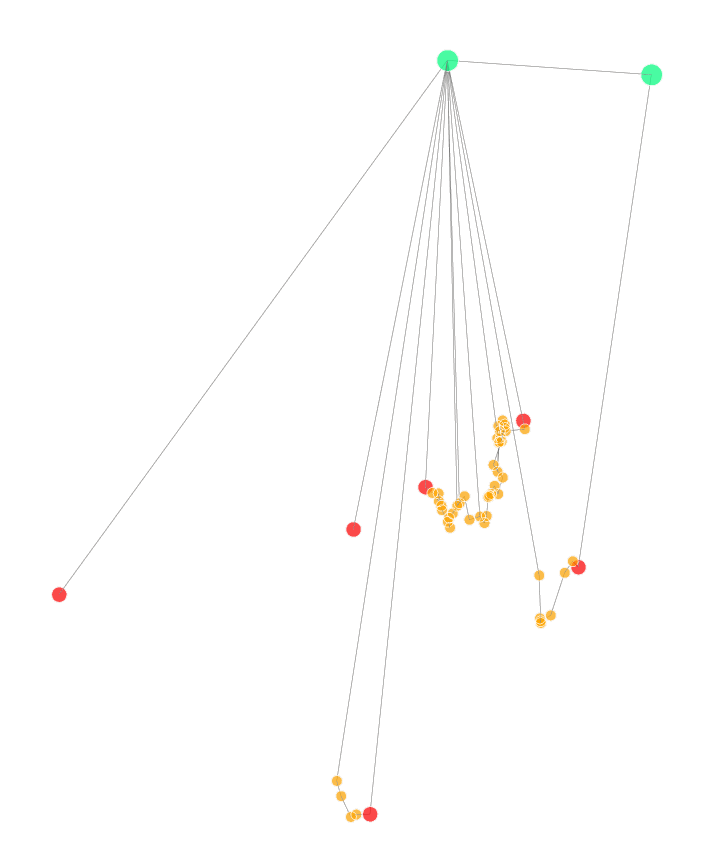}
		\caption{Budget: $ 60 $ nodes (actual size: $ 55 $).}
		\label{fig:office-budlite-budget60}
	\end{subfigure}%
	\hfill
	\begin{subfigure}{.333\textwidth}
		\centering
		\includegraphics[width=\columnwidth]{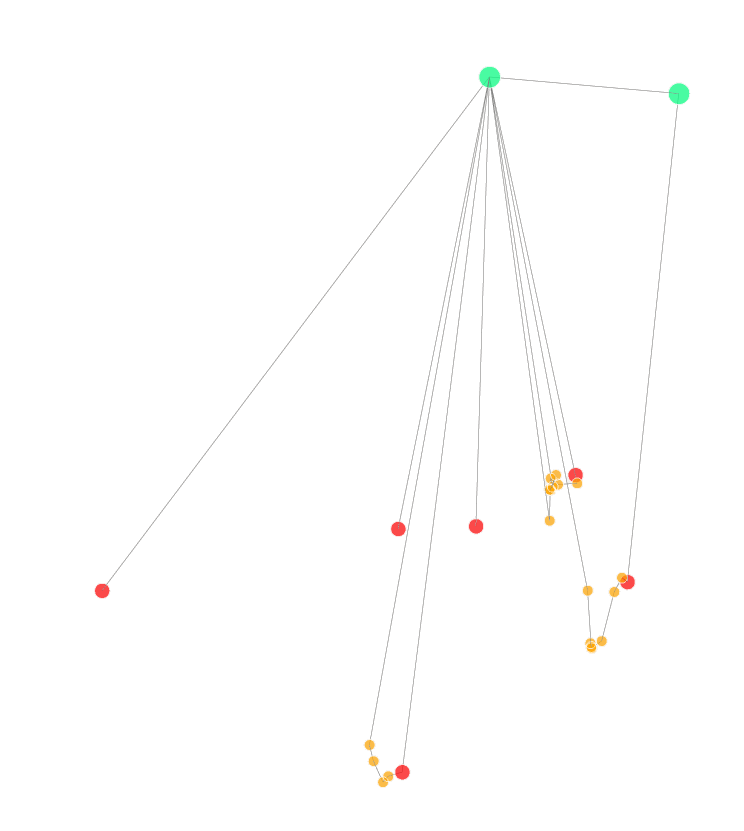}
		\caption{Budget: $ 30 $ nodes (actual size: $ 24 $).}
		\label{fig:office-budlite-budget30}
	\end{subfigure}%
	\hfill
	\begin{subfigure}{.333\textwidth}
		\centering
		\includegraphics[width=\columnwidth]{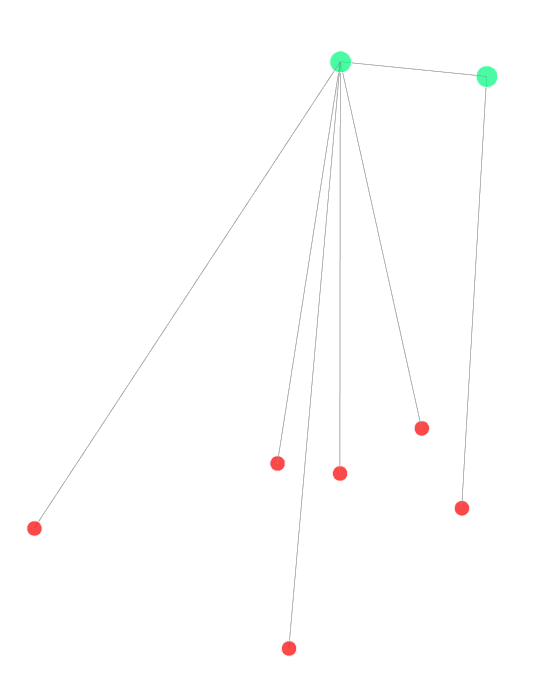}
		\caption{Budget: $ 10 $ nodes (actual size: $ 8 $).}
		\label{fig:office-budlite-budget10}
	\end{subfigure}
	\caption{Compressed \dsgname of the Office output by \namebu.}
	\label{fig:office-budlite}
\end{figure}

Figure~\ref{fig:office-budlite} shows the compressed graphs output by \namebu.
The same general remarks carried out before also apply here.
Notably,
one of the interested rooms (gathering five out of the six terminals nodes) is very large
(it is in fact a corridor, see~\autoref{fig:office-original}),
which may cause the compression procedure of \namebu to act in too unbalanced fashion
if the portions of shortest paths passing through that room are abstracted away at once.
To improve granularity of compression in this case,
we forced a maximum number of places nodes that can be compressed within a single iteration
(corresponding to a slight modification to the condition of~\cref{alg1:check-children} of~\cref{alg:bottom-up}):
in particular,
we set $ 20 $ places nodes as maximum threshold,
so that long stretches of places nodes are compressed at a pace of $ 20 $ (or fewer) at each iteration.

\begin{figure}
	\centering
	\begin{subfigure}{.25\textwidth}
		\centering
		\includegraphics[width=.8\columnwidth]{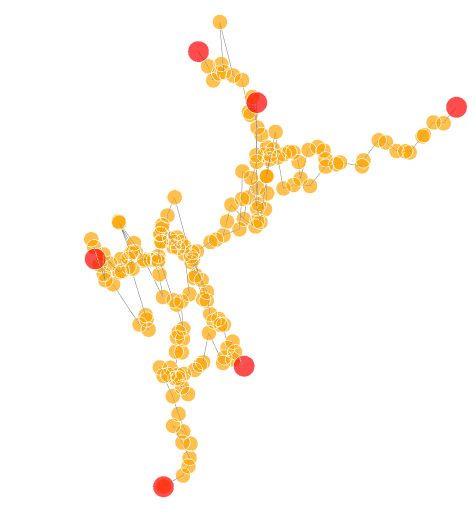}
		\caption{Initial condition.}
		\label{fig:office-steps-budlite-0}
	\end{subfigure}%
	\hfill
	\begin{subfigure}{.25\textwidth}
		\centering
		\includegraphics[width=.8\columnwidth]{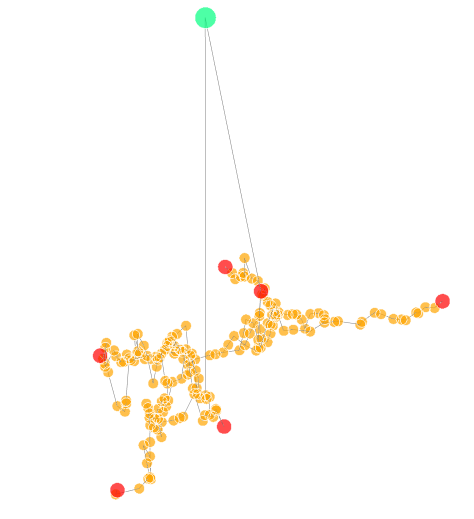}
		\caption{Iteration $ 1 $.}
		\label{fig:office-steps-budlite-1}
	\end{subfigure}%
	\hfill
	\begin{subfigure}{.25\textwidth}
		\centering
		\includegraphics[width=.8\columnwidth]{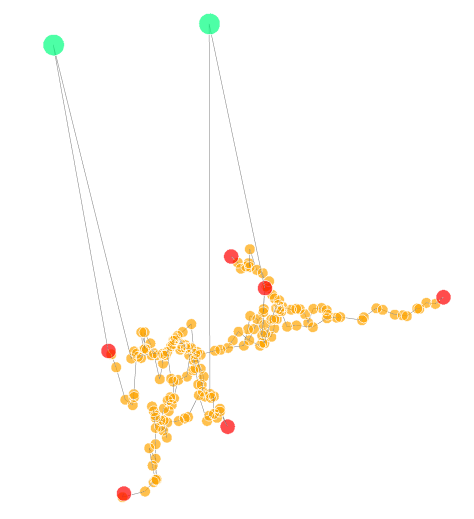}
		\caption{Iteration $ 2 $.}
		\label{fig:office-steps-budlite-2}
	\end{subfigure}%
	\hfill
	\begin{subfigure}{.25\textwidth}
		\centering
		\includegraphics[width=.8\columnwidth]{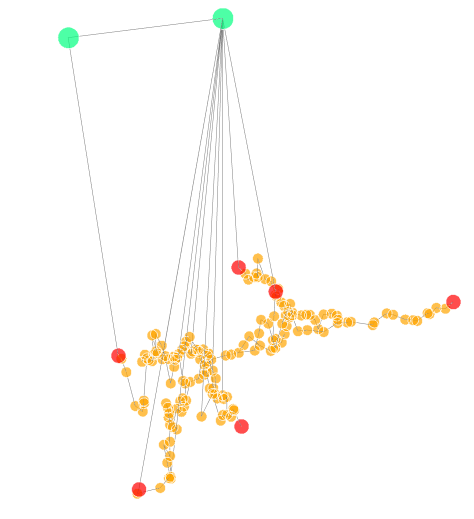}
		\caption{Iteration $ 10 $.}
		\label{fig:office-steps-budlite-10}
	\end{subfigure}\\
	\begin{subfigure}{.25\textwidth}
		\centering
		\includegraphics[width=.8\columnwidth]{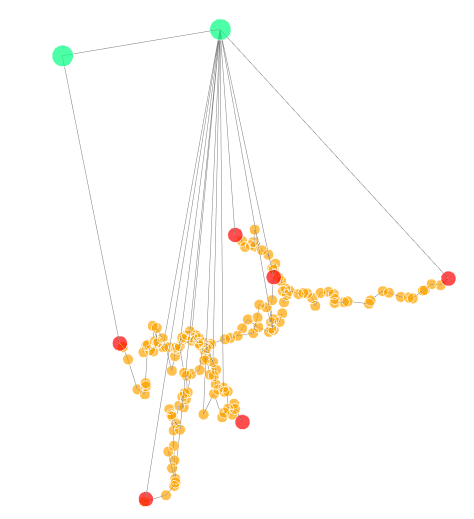}
		\caption{Iteration $ 11 $.}
		\label{fig:office-steps-budlite-11}
	\end{subfigure}%
	\hfill
	\begin{subfigure}{.25\textwidth}
		\centering
		\includegraphics[width=.8\columnwidth]{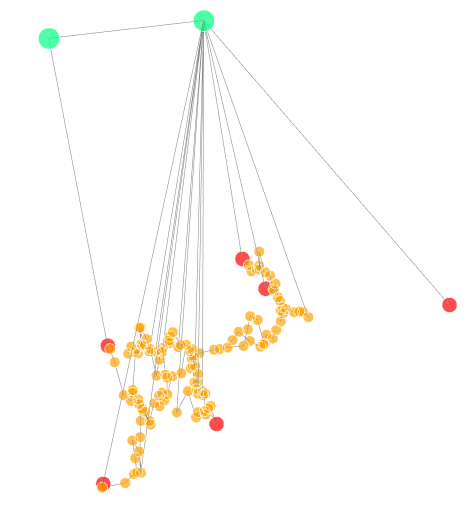}
		\caption{Iteration $ 15 $.}
		\label{fig:office-steps-budlite-15}
	\end{subfigure}%
	\hfill
	\begin{subfigure}{.25\textwidth}
		\centering
		\includegraphics[width=.8\columnwidth]{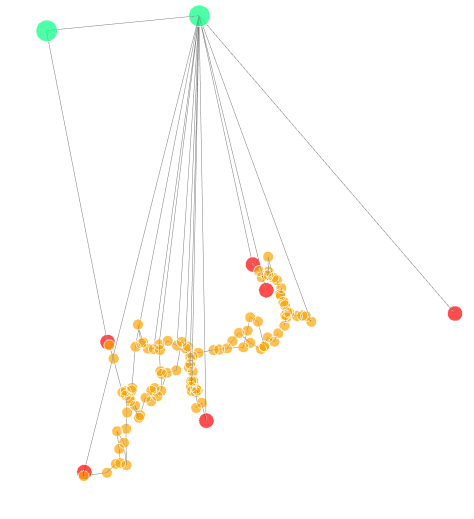}
		\caption{Iteration $ 20 $.}
		\label{fig:office-steps-budlite-20}
	\end{subfigure}%
	\hfill
	\begin{subfigure}{.25\textwidth}
		\centering
		\includegraphics[width=.8\columnwidth]{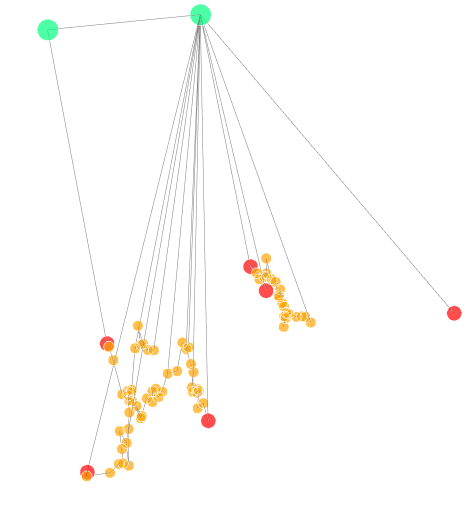}
		\caption{Iteration $ 25 $.}
		\label{fig:office-steps-budlite-25}
	\end{subfigure}
	\caption{Iterations of the bottom-up compression phase of \namebu (budget: $ 60 $ nodes).}
	\label{fig:office-steps-budlite}
\end{figure}

Figure~\ref{fig:office-steps-budlite} shows the breakdown of some iterations of the compression procedure carried out within
the \namebu algorithm,
corresponding to the loop at~\cref{alg1:foreach-path} of~\cref{alg:bottom-up}.
The initial condition shown in~\autoref{fig:office-steps-budlite-0} corresponds to the spanner output by \texttt{build\_spanner}
in~\cref{alg1:initialization}.
Note that the latter is composed of only places nodes,
and rooms nodes abstractions are introduced by subsequent compression iterations.
Specifically,
one room is added at the first iteration (\autoref{fig:office-steps-budlite-1})
and the other,
which is connected to the first room,
at the second iteration (\autoref{fig:office-steps-budlite-2}).
Shortest paths are parsed one after the other,
which causes places nodes to be retained until there is no path using them:
for example,
the inter-layer edge between the rightmost terminal node and its associated room node is added at iteration $ 11 $ (\autoref{fig:office-steps-budlite-11}),
but the corresponding stretch of places nodes is removed only at iteration $ 15 $ (\autoref{fig:office-steps-budlite-15}),
when all shortest paths with source the rightmost terminal node have been parsed and shortcut through the room.
The output compressed \dsgshortname in~\autoref{fig:office-budlite-budget60} is obtained after $ 28 $ iterations.

\begin{figure}
	\centering
	\begin{subfigure}{.333\textwidth}
		\centering
		\includegraphics[width=\columnwidth]{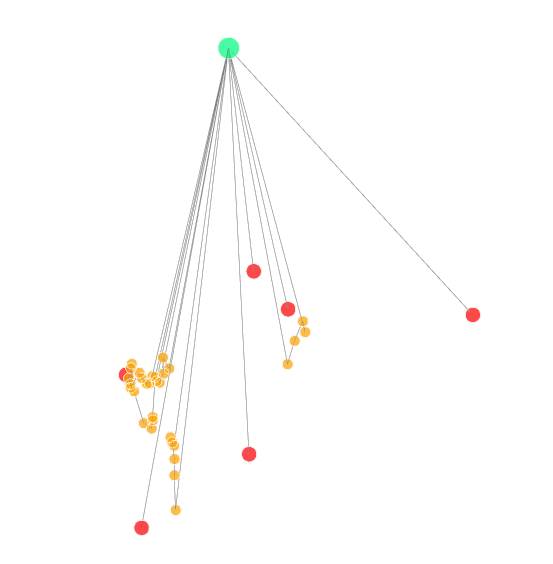}
		\caption{Budget: $ 60 $ nodes (actual size: $ 38 $).}
		\label{fig:office-todlite-budget40}
	\end{subfigure}%
	\hfill
	\begin{subfigure}{.333\textwidth}
		\centering
		\includegraphics[width=.9\columnwidth]{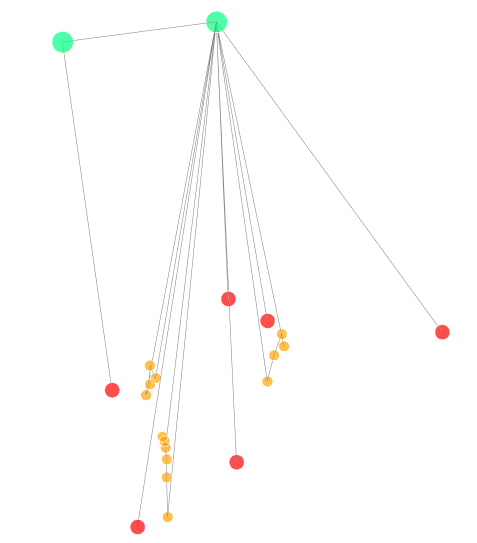}
		\caption{Budget: $ 30 $ nodes (actual size: $ 22 $).}
		\label{fig:office-todlite-budget30}
	\end{subfigure}%
	\hfill
	\begin{subfigure}{.333\textwidth}
		\centering
		\includegraphics[width=\columnwidth]{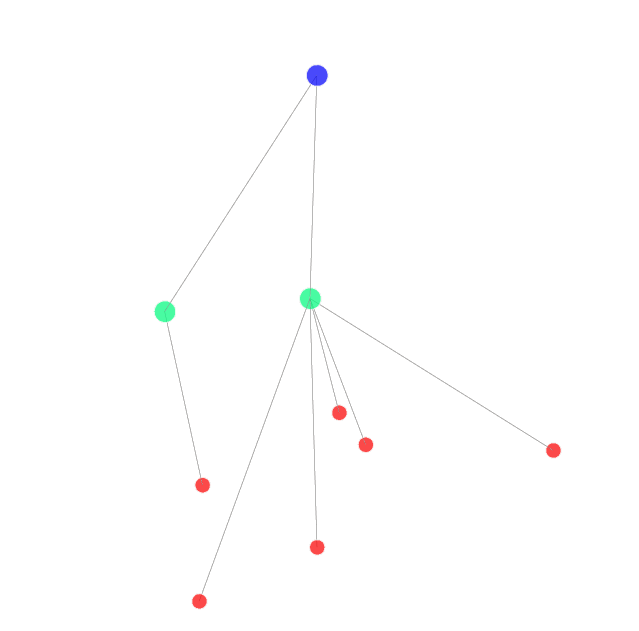}
		\caption{Budget: $ 10 $ nodes (actual size: $ 9 $).}
		\label{fig:office-todlite-budget10}
	\end{subfigure}
	\caption{Compressed \dsgname of the Office output by \nametd.}
	\label{fig:office-todlite}
\end{figure}

Figure~\ref{fig:office-todlite} shows the compressed graphs output by \nametd.
The same general remarks carried out for the Apartment also apply here.
In particular,
note that the small budget of $ 10 $ nodes in this case
prevents \nametd to perform any expansion,
see~\autoref{fig:office-todlite-budget10}.
	}
	
\end{document}